\documentclass{article}
\usepackage{etoolbox}

\usepackage{amsmath,amsthm}
\usepackage[dvipsnames,table]{xcolor}

\newtoggle{arxiv}
\toggletrue{arxiv}
\iftoggle{arxiv}{
  \usepackage[parfill]{parskip}
  \usepackage[citestyle=numeric-comp,sorting=nyt,maxbibnames=99,backend=biber]{biblatex}
  \newcommand{\citep}{\parencite}
  \newcommand{\citet}{\textcite}
  \addbibresource{biblio.bib}

  \setlength{\textwidth}{6.8in}
  \setlength{\textheight}{9in}
  \setlength{\oddsidemargin}{0in}
  \setlength{\evensidemargin}{0in}
  \setlength{\topmargin}{-0.5in}
  \setlength{\marginparwidth}{0.8in}
  \setlength{\parskip}{6pt}
  \setlength{\parindent}{0pt}

  \RequirePackage[T1]{fontenc}
  \RequirePackage[tt=false, type1=true]{libertine}
  \RequirePackage[varqu]{zi4}
  \RequirePackage[libertine]{newtxmath}

}{
  \usepackage{amsfonts}
  \addtolength{\tabcolsep}{-3pt}
  \def\setstretch#1{\renewcommand{\baselinestretch}{#1}}
  \setstretch{0.98}
  \addtolength{\parskip}{-1pt}

  \usepackage[compact]{titlesec}
  \titlespacing{\section}{0pt}{*1}{*0}
  \titlespacing{\subsection}{0pt}{*1}{*0}
  \usepackage[subtle, mathdisplays=tight, charwidths=tight, leading=normal]{savetrees}
  \addtolength\textfloatsep{-0.5em}
  \addtolength\intextsep{-0.2em}
}

\usepackage{bm}

\usepackage{booktabs}
\usepackage{microtype}
\usepackage{multirow}
\usepackage[inline]{enumitem}
\usepackage{graphicx}
\usepackage[font=small]{caption}
\usepackage{algorithm,algorithmicx,algpseudocode}
\usepackage{afterpage}
\usepackage{float}

\usepackage{url}
\usepackage{hyperref}
\definecolor{link}{HTML}{307ef1}
\hypersetup{
     colorlinks   = true,
     linkcolor    = link,
     citecolor    = link,
     urlcolor     = link,
     pdfborderstyle={/S/U/W 1},
}

\usepackage[capitalise,noabbrev]{cleveref}

\usepackage{xspace}
\let\wrapperincludegraphics\includegraphics
\newcommand{\github}{\raisebox{-1.3pt}{\wrapperincludegraphics[height=1.05em]{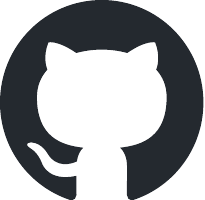}}\xspace}
\newcommand{\hf}{\raisebox{-1.3pt}{\wrapperincludegraphics[height=1.05em]{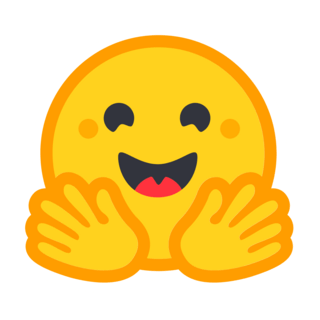}}\xspace}

\newtheorem{theorem}{Theorem}
\newtheorem{lemma}{Lemma}[section]
\newtheorem{corollary}[lemma]{Corollary}
\newtheorem{proposition}[theorem]{Proposition}

\def\*#1{\boldsymbol{#1}}

\definecolor{betterGreen}{HTML}{0B8043}
\definecolor{worseRed}{HTML}{C5221F}

\definecolor{tokDColor}{HTML}{8FBF7A}
\definecolor{tokMColor}{HTML}{D9A441}

\newcommand{\tabfont}{\small}

\newcommand{\dgain}[1]{\,{\scriptsize\textcolor{betterGreen}{$(+#1)$}}}
\newcommand{\dloss}[1]{\,{\scriptsize\textcolor{worseRed}{$(#1)$}}}
\newcommand{\dnil}{\,{\scriptsize\textcolor{black!50}{$(0.0)$}}}
\iftoggle{arxiv}{
  \title{Multi-Token Residual Prediction}
  \author{%
    Yufeng Xu$^{1,2}$ \quad Zishuo Bao$^{1,2}$ \quad Qian Wang$^{2}$ \quad Zeshen Zhang$^{1}$ \quad Haoqi Zhang$^{2}$ \\[0.3em]
    Bowen Peng$^{3}$ \quad Ang Li$^{1,4}$ \quad Rahul Chalamala$^{4}$ \quad Yucheng Lu$^{1,2}$ \\[0.6em]
    $^{1}$New York University \quad $^{2}$New York University Shanghai \quad $^{3}$Nous Research \quad $^{4}$Modal \\[0.3em]
    \texttt{\{xu.yufeng, lu.yucheng\}@nyu.edu} \\[0.4em]
    \github \href{https://github.com/heavyball-research/multi-token-residual-prediction}{\textcolor{link}{\textbf{Code}}} \quad
    \hf \href{https://huggingface.co/collections/heavyball/sdar-mrp}{\textcolor{link}{\textbf{Models}}}
  }
  \date{}
}{
  \title{Multi-Token Residual Prediction}
}

\begin{document}

\maketitle

\renewcommand{\thefootnote}{\arabic{footnote}}
\begingroup\renewcommand{\thefootnote}{}\endgroup

\begin{abstract}
Diffusion Language Models (DLMs) generate text by iteratively denoising masked token sequences, offering a tradeoff between parallelism and quality compared to autoregressive models. In current practice, the number of tokens decoded per step is controlled by a confidence threshold, and quality degrades monotonically as more tokens are denoised per step. We introduce \textbf{Multi-token Residual Prediction (MRP)}, a lightweight module that enables dependency-aware multi-token denoising within a single backbone forward pass. MRP exploits a key property of the denoising process: the logit distributions at adjacent denoising steps are remarkably similar. Rather than running the backbone a second time to obtain the next-step logits, MRP predicts the residual between steps from the backbone's hidden states, effectively denoising more tokens per backbone forward at a fraction of the cost. We apply MRP across the two operating regimes of DLM decoding. In the \emph{high-quality-low-throughput} static denoising regime, MRP serves as a drafter for speculative decoding: its proposals are verified against the backbone, yielding lossless acceleration of up to $1.4\times$ in SGLang. In the \emph{low-quality-high-throughput} dynamic denoising regime, MRP instead drives a remasking scheme that revokes over-eager reveals, recovering most of the accuracy lost to aggressive low-threshold decoding and improving accuracy by up to $22.6$ points on code generation task HumanEval and $17.7$ points on reasoning task GSM8K.
\end{abstract}

\section{Introduction}
\label{sec:intro}

\afterpage{%
\begin{figure}[!htbp]
    \centering
    \includegraphics[width=1.0\linewidth]{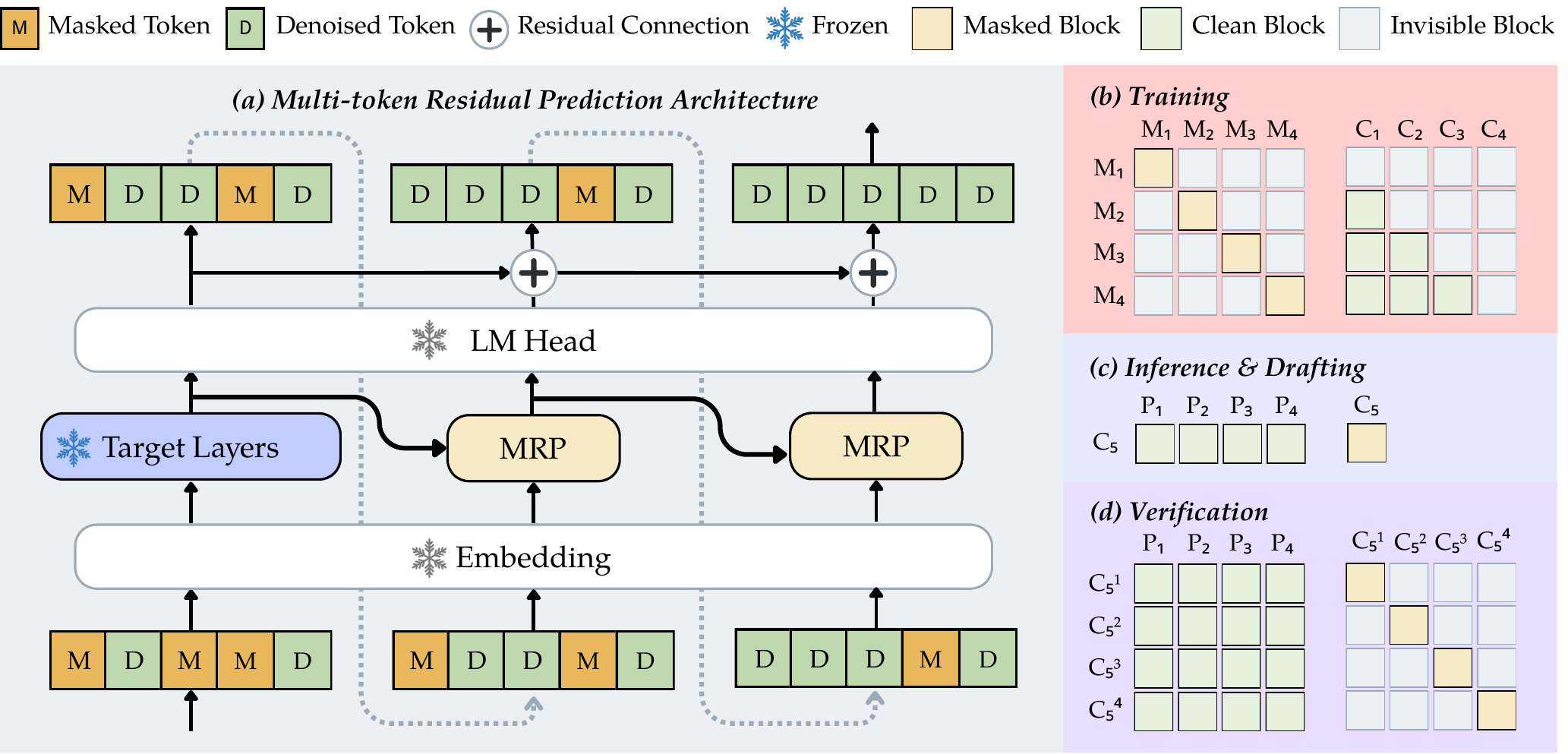}
    \caption{The figure above illustrates the training and inference pipeline of our
    multi-token residual prediction (MRP) method. \textit{(a) Architecture.} A frozen
    DLM backbone embeds the input sequence, target layers produce hidden states for the
    masked positions, and a chain of lightweight MRP heads sequentially predict the
    residual of the next logit state and decode the next probable token \textcolor{tokDColor}{\textbf{[D]}} at the
    adjacent masked \textcolor{tokMColor}{\textbf{[M]}} position through the shared LM head. \textit{(b) Training.} The
    sequence is duplicated and one half is randomly masked; loss is computed only on the
    masked half, where each masked block $M_i$ attends to itself and to all preceding
    clean blocks $C_{1{:}i-1}$. \textit{(c) Inference \& drafting.} A single new block
    is decoded in parallel given the committed clean prefix $P_{1{:}i-1}$, supporting
    both direct and speculative generation under an identical attention pattern.
    \textit{(d) Verification.} The $K{+}1$ draft candidates $C_i^{0{:}K}$ are verified
    jointly against the backbone, where candidate $C_i^{k}$ embeds the first $k$ draft
    tokens and attends only to the clean prefix $P_{1{:}i-1}$ and to itself, so
    all candidates are scored in one forward pass.}
    \label{fig:1}
\end{figure}
}

The design of language models involves a fundamental tension between \emph{sequential dependency} and \emph{parallelism}. Autoregressive models resolve this tension in favor of dependencies: each token is generated conditioned on the full history of preceding tokens, yielding high-quality text at the cost of strictly sequential decoding~\citep{brown2020GPT3, liu2024DeepSeekV3, yang2025qwen3}. Diffusion Language Models (DLMs) make the opposite choice, predicting all the subsequent tokens simultaneously and refining them through iterative denoising~\citep{sahoo2024MDLM, lou2024SEDD, nie2025LLaDA, hu2026RCDLM}. This enables significant parallelism, but often at the cost of generation quality.

Recent efforts have sought to interpolate between these two extremes. Block diffusion~\citep{arriola2025BD3LM} applies diffusion within fixed-length blocks while maintaining autoregressive ordering across blocks. Threshold-based decoding~\citep{cheng2025SDAR, wu2025FastdLLMv2, wu2025FastdLLM, hong2025WINO} adaptively controls how many tokens to unmask per step according to model confidence.
These methods offer practical improvements, but they operate along the same quality--throughput Pareto frontier: unmasking additional tokens in a single step increases throughput at the expense of quality, precisely because each token is decoded without awareness of the other tokens resolved in the same step. The central question thus remains open: \emph{can we decode multiple tokens per step in a DLM while preserving the inter-token dependencies that are essential for generation quality?}

We answer this question affirmatively with a key insight: the logit distributions produced by a diffusion backbone at adjacent denoising steps are remarkably similar. When only a few tokens are unmasked between two consecutive steps, the backbone's output barely changes, yet computing it requires a full forward pass. The residual between the two outputs is significantly smaller in magnitude than the outputs themselves, making it a far easier prediction target: the zero function is already a reasonable approximation, and a small module only needs to learn corrections to an already-good prediction, following the same principle underlying residual learning~\citep{he2016ResNet}.

Building on this observation, we introduce \textbf{Multi-token Residual Prediction (MRP)}, a lightweight module that predicts the inter-step logit residual directly from the backbone's hidden states. At inference time, a single backbone forward pass produces both the current logits and the hidden representation from which MRP infers the residual correction. The corrected logits approximate the output of a second backbone forward pass at a fraction of the computational cost, enabling additional tokens to be decoded without running the backbone again.

We deploy MRP across the two operating regimes of DLM decoding. In the \emph{high-quality, low-throughput} static denoising regime, where output quality must be preserved, MRP serves as a lightweight drafter for \textbf{speculative decoding}: the MRP module proposes additional tokens beyond what the backbone's confidence yields, and these proposals are verified by the backbone in its subsequent forward pass. Accepted tokens incur no quality loss, yielding lossless acceleration. 
As an additional knob, MRP also supports \textbf{direct decoding}, in which the corrected log-density is used without verification, providing a tunable quality--speed tradeoff that is strictly more favorable than threshold-based decoding at matched throughput.
In the \emph{low-quality, high-throughput} dynamic denoising regime, where a low confidence threshold unmasks many tokens per step at the expense of accuracy, MRP instead drives a \textbf{remasking decoding} scheme: the corrected log-density revokes over-eager reveals, recovering most of the accuracy lost to aggressive low-threshold decoding at a moderate efficiency cost.

Our contributions are as follows:
\begin{itemize}[nosep, leftmargin=12pt]
    \item We identify and theoretically characterize the inter-step log-density residual in masked diffusion language models (Section~\ref{sec:method}). This characterization motivates and justifies the design of a lightweight residual predictor.
    \item We introduce MRP, a residual prediction module trained with a tailored objective, and show how it applies across both regimes of DLM decoding: speculative decoding in the static regime and remasking in the dynamic regime, analyzing the operating point of each (Section~\ref{sec:method}).
    \item We integrate our method into the open-sourced inference engine SGLang \citep{zheng2024sglang} and demonstrate that MRP achieves up to $1.4\times$ lossless speedup in the static regime and recovers up to $22.6$ accuracy points in the dynamic regime, on SDAR models across three scales and multiple benchmarks (Section~\ref{sec:experiments}).
\end{itemize}

\section{Preliminary}
\label{sec:prelim}

We start by describing the decoding procedure of Diffusion Language Models (DLMs) and establish notation used throughout the paper.

\textbf{Denoising Loop.} A DLM generates a sequence of $L$ tokens by iteratively reversing a masking process~\citep{sahoo2024MDLM, nie2025LLaDA}. Generation begins from a fully masked sequence $\*x_T$ in which every position is set to a special \texttt{[MASK]} token. At each denoising step $t = T, T{-}1, \ldots, 1$, a bidirectional Transformer $f$ (the \emph{backbone}) takes the partially masked sequence $\*x_t$ as input and produces logits $\*\ell_t = f(\*x_t) \in \mathbb{R}^{L \times V}$, where $V$ is the vocabulary size. These logits are unnormalized log-densities; a softmax over the vocabulary normalizes them into the per-position predictive distribution $\*\pi_t^i = \mathrm{softmax}(\*\ell_t^i)$, where $\*\pi_t^i=\*\pi_t\*e_i$ denotes the distribution for the $i$-th position. We follow the literature and denote the top-1 probability at each position as its \emph{confidence} \citep{cheng2025SDAR,nie2025LLaDA}.
After each forward pass, the model reveals the masked positions it is most confident about, replacing their \texttt{[MASK]} tokens with the predicted tokens to form $\*x_{t-1}$. This repeats until all positions are revealed.

\textbf{Block Diffusion.} In the block diffusion variant~\citep{arriola2025BD3LM}, generation is partitioned into blocks of tokens produced left to right. Previously generated blocks serve as fully unmasked context while the current block is denoised through the loop described above. Our method applies identically within each block; all results in this paper use the block diffusion setting.

\section{Multi-Token Residual Prediction}
\label{sec:method}

\subsection{Motivation}

Our motivation starts with a natural question following Section~\ref{sec:prelim}: \emph{how close are the logits $\*\ell_t$ and $\*\ell_{t-1}$ in practice, and can we characterize the structure of their difference?} We address this question empirically and then provide theoretical grounding.

\textbf{Empirical characterization of the residual.} We define the per-step logit residual as
\begin{equation}
\*\delta_t = \*\ell_{t-1} - \*\ell_t \in \mathbb{R}^{L \times V},
\end{equation}
where $\*\ell_t$ and $\*\ell_{t-1}$ are the backbone logits before and after unmasking a set of positions. Analogously, we define the per-step hidden-state residual $\*\Delta_t = \*h_{t-1} - \*h_t$ between the backbone hidden states at adjacent denoising steps. To probe the residual at different effective step sizes, we run greedy block-diffusion decoding on the GSM8K evaluation set with SDAR-1.7B/4B/8B, and compare backbone states that are $r$ revealed-token steps apart, with revealed token count $r$ ranging from $1$ to $16$ (a full block). Figure~\ref{fig:motivation} reports the resulting residuals; see Appendix~\ref{app:residual_magnitude} for the full measurement protocol.

\begin{figure}[t!]
    \centering
    \includegraphics[width=0.5\linewidth]{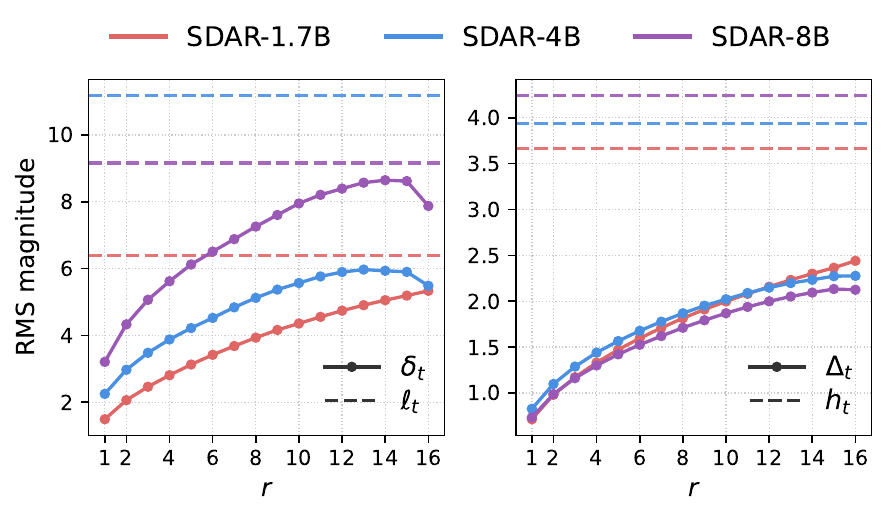}
    \caption{Magnitudes of the per-step residual versus the full backbone state,
    measured on GSM8K and averaged per entry (RMS). The horizontal axis $r$ is
    the number of tokens unmasked between two adjacent denoising states $\*x_t$
    and $\*x_{t-1}$; line color indicates the SDAR backbone scale (1.7B / 4B /
    8B). \textbf{Left:} logits. Solid curves with markers show the residual
    $\*\delta_t = \*\ell_{t-1} - \*\ell_t$; dashed lines show the average
    magnitude of the backbone logits $\*\ell_t$ for reference. \textbf{Right:}
    hidden states. Solid curves show the residual $\*\Delta_t = \*h_{t-1} -
    \*h_t$; dashed lines show the average magnitude of the backbone hidden
    states $\*h_t$. Across all three scales and both representations, the
    residual is consistently smaller than the underlying state, with the gap
    widening sharply for small $r$ (the practical decoding regime).}
    \label{fig:motivation}
\end{figure}
The key observation here is the \emph{magnitude gap}: the RMS magnitude of $\*\delta_t$ is consistently and significantly smaller than that of the full logits $\*\ell_t$. When $r=1$, the residual is around 20\% of the magnitude of the logits; even at $r=16$ (unmasking an entire block at once), the residual remains below the full logits in magnitude across all model scales. This gap widens for smaller $r$, which is precisely the operating regime of practical denoising. This suggests that predicting $\*\delta_t$ requires capturing a signal with substantially lower dynamic range than predicting $\*\ell_{t-1}$ from scratch.

\textbf{Theoretical justification.} The empirical observations above can be grounded in the Markov chain structure of the denoising process. The sequence of partially masked states $\*x_T, \*x_{T-1}, \ldots, \*x_0$ forms a Markov chain: each state $\*x_{t-1}$ is obtained from $\*x_t$ by unmasking a subset of positions $\mathcal{R}_t$, so the transition depends only on the current state. The backbone output $\*\ell_t = f(\*x_t)$ is a deterministic function of the chain's state. We exploit this structure to bound the residual $\*\delta_t = \*\ell_{t-1} - \*\ell_t$.

The key observation is that the transition $\*x_t \to \*x_{t-1}$ is a \emph{small perturbation}: it modifies only $|\mathcal{R}_t|$ out of $L$ positions, leaving the remaining $L - |\mathcal{R}_t|$ positions unchanged. For any masked position $i \notin \mathcal{R}_t$, the change in its predictive distribution is driven entirely by the indirect influence of the newly revealed tokens through the backbone's attention mechanism. Recall from Section~\ref{sec:prelim} the predictive distribution $\*\pi_t^i = \mathrm{softmax}(\*\ell_t^i)$. We bound the change across one transition as follows.

\begin{proposition}[One-step contraction]
\label{prop:contraction}
If the backbone $f$ is $\kappa$-Lipschitz with respect to its input embeddings, then for any masked position $i \notin \mathcal{R}_t$:
\begin{equation}
    D_{\mathrm{TV}}(\*\pi_{t-1}^i,\, \*\pi_t^i) \leq \kappa \cdot \frac{|\mathcal{R}_t|}{L} \cdot \max_{j \in \mathcal{R}_t} \|\*e_{v_j} - \*e_{\texttt{[MASK]}}\|_2,
\end{equation}
where $\*e_v$ denotes the embedding of token $v$, $\*e_{\texttt{[MASK]}}$ denotes the embedding of mask token and $D_{\mathrm{TV}}$ is the total variation distance.
\end{proposition}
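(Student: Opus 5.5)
The plan is to chain three inequalities: total variation by logit distance, logit distance by input distance, and input distance by the revealed tokens.

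\textbf{Step 1 (softmax to logits).} Softmax is $1$-Lipschitz from $\ell_\infty$ (or $\ell_2$) logits to $\ell_1$ probabilities up to a factor of $2$. Its Jacobian is $\mathrm{diag}(\*\pi)-\*\pi\*\pi^\top$, whose $\ell_\infty\to\ell_1$ operator norm is at most $2\max_v \pi_v(1-\pi_v)\cdot$const. A cleaner route is Pinsker applied to the KL divergence between two softmaxes. Either way one gets
\[
D_{\mathrm{TV}}(\*\pi_{t-1}^i,\*\pi_t^i)\le \tfrac12\|\*\ell_{t-1}^i-\*\ell_t^i\|_\infty\le \|\*\ell_{t-1}^i-\*\ell_t^i\|_2 .
\]
Any constant lost here is absorbed into $\kappa$.

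\textbf{Step 2 (Lipschitz backbone).} By hypothesis,
\[
\|\*\ell_{t-1}^i-\*\ell_t^i\|_2\le \kappa\,\|E(\*x_{t-1})-E(\*x_t)\|.
\]
Here $E(\cdot)$ is the stacked input embedding, and the norm is the one in which $f$ is assumed Lipschitz.

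\textbf{Step 3 (embedding perturbation).} The inputs differ only at the rows $j\in\mathcal{R}_t$, where the row changes from $\*e_{\texttt{[MASK]}}$ to $\*e_{v_j}$. To obtain the factor $|\mathcal{R}_t|/L$, I would interpret the Lipschitz condition with respect to the per-position averaged norm $\|X\|:=\frac1L\sum_{j}\|X_j\|_2$. This is natural because attention averages over positions, so the influence on any one output is spread over $L$ inputs. This gives
\[
\|E(\*x_{t-1})-E(\*x_t)\|=\frac1L\sum_{j\in\mathcal{R}_t}\|\*e_{v_j}-\*e_{\texttt{[MASK]}}\|_2\le\frac{|\mathcal{R}_t|}{L}\max_{j\in\mathcal{R}_t}\|\*e_{v_j}-\*e_{\texttt{[MASK]}}\|_2 .
\]
Chaining Steps 1 to 3 yields the claim.

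\textbf{Main obstacle.} The delicate point is the $1/L$ factor, which cannot come from a plain $\ell_2$ Lipschitz assumption on the whole sequence; that assumption would give $\sqrt{|\mathcal{R}_t|}$ instead. I would therefore state explicitly that $\kappa$-Lipschitz is meant per output position, with respect to the position-averaged $(\ell_{2,1}/L)$ input norm. I would justify this normalization by noting that uniform-ish attention mixes the $L$ value vectors with weights $\approx 1/L$, and I would fold the softmax constant from Step 1 into $\kappa$.
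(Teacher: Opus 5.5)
Your three-link chain (row-sparse embedding perturbation, Lipschitz backbone, softmax-to-TV) has the same skeleton as the paper's proof. You treat the decisive $1/L$ factor differently, and your treatment is the rigorous one.

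The paper assumes $f$ is $\kappa$-Lipschitz in Frobenius norm on the whole embedding matrix and bounds $\|\*E_{t-1}-\*E_t\|_F\le\sqrt{|\mathcal{R}_t|}\max_{j}\|\*e_{v_j}-\*e_{\texttt{[MASK]}}\|_2$. It then asserts that a single output row moves by at most $1/L$ of this, on the grounds that the Lipschitz condition covers the full output tensor while only one row is bounded. It finishes with the $\tfrac12$-Lipschitz property of softmax and the loosening $\sqrt{|\mathcal{R}_t|}\le|\mathcal{R}_t|$.

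That middle step does not follow. Restricting to one row only gives $\|\*\ell^i_{t-1}-\*\ell^i_t\|_2\le\|\*\ell_{t-1}-\*\ell_t\|_F$, with no gain. For example, take a map whose $i$-th output row is $\kappa$ times a norm-one linear image of input row $j$, with all other rows constant. It is $\kappa$-Lipschitz in Frobenius norm, yet revealing position $j$ moves row $i$ by the full $\kappa\|\*e_{v_j}-\*e_{\texttt{[MASK]}}\|_2$. With two equiprobable tokens and a small perturbation, the resulting TV change exceeds the claimed bound once $L\ge 3$. This is exactly the obstacle you flag. An honest Frobenius hypothesis yields only $D_{\mathrm{TV}}\le\tfrac{\kappa}{2}\sqrt{|\mathcal{R}_t|}\max_j\|\*e_{v_j}-\*e_{\texttt{[MASK]}}\|_2$.

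You instead assume per-row Lipschitzness with respect to $\frac1L\sum_j\|X_j\|_2$. This makes the $1/L$ legitimate and gives $|\mathcal{R}_t|/L$ directly from the sum over revealed rows, with no square-root loosening. The price is that your hypothesis is much stronger and close to a restatement of the conclusion. It is equivalent to requiring that perturbing any single input row moves any output row by at most $\kappa/L$ times the perturbation. That is a uniform-influence condition, which the ``uniform-ish attention'' heuristic motivates but does not establish, since one revealed token can dominate a position's attention.

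Your Step 1 is fine. The cleanest justification is $\|(\mathrm{diag}(\*\pi)-\*\pi\*\pi^\top)u\|_1=\mathbb{E}_{\*\pi}|u-\mathbb{E}_{\*\pi}u|\le\tfrac12(\max_v u_v-\min_v u_v)\le\|u\|_\infty$. Integrating this along the segment between the two logit vectors gives $D_{\mathrm{TV}}\le\tfrac12\|\*\ell^i_{t-1}-\*\ell^i_t\|_\infty$ without unspecified constants.
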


The bound is controlled by three intuitive quantities: the model's sensitivity $\kappa$, the fraction of positions perturbed $|\mathcal{R}_t|/L$, and the embedding distance between the revealed tokens and the mask token. Since $|\mathcal{R}_t| \ll L$ at each step\footnote{In some recent models such as SDAR, $L$ could be small for a single block (e.g., 16). However, this doesn't violate our bound or assumption.}, the perturbation ratio is small and the TV distance is correspondingly bounded.

Applying this bound recursively along the chain yields a cumulative characterization:

\begin{corollary}[Cumulative residual decay]
\label{cor:decay}
As denoising progresses, the number of remaining masked positions $m_t = |\{i : x_t^i = \texttt{[MASK]}\}|$ decreases monotonically. At later steps, the perturbation ratio $|\mathcal{R}_t|/L$ remains small while the predictions $\*\pi_t^i$ concentrate (the backbone becomes more confident on fewer remaining positions). Consequently, $\|\*\delta_t\|$ decreases along the chain: later denoising steps produce smaller residuals than earlier ones.
\end{corollary}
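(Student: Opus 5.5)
The corollary is largely qualitative, so the plan is to make each clause precise and derive it from Proposition~\ref{prop:contraction}, plus one extra ingredient that relates the size of the logit residual to the confidence of the predictions.

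\textbf{Step 1 (monotonicity of $m_t$).} This follows from the Markov-chain construction. The transition $\*x_t\to\*x_{t-1}$ only replaces \texttt{[MASK]} tokens by vocabulary tokens, and never the reverse. Hence $m_{t-1}=m_t-|\mathcal{R}_t|\le m_t$, with strict decrease whenever $\mathcal{R}_t\neq\emptyset$.

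\textbf{Step 2 (per-step TV bound along the chain).} Apply Proposition~\ref{prop:contraction} at every step. Set $C=\max_v\|\*e_v-\*e_{\texttt{[MASK]}}\|_2$. Then for every still-masked position $i\notin\mathcal{R}_t$ we get $D_{\mathrm{TV}}(\*\pi_{t-1}^i,\*\pi_t^i)\le \kappa C|\mathcal{R}_t|/L$. Summing over steps by the triangle inequality controls the cumulative drift by $\kappa C(T-t)/L\cdot\max_s|\mathcal{R}_s|$, which gives the cumulative form. Since $|\mathcal{R}_t|\le m_t$ and $m_t$ shrinks, the admissible reveal sizes, and hence the bound, cannot grow late in the chain under a fixed reveal schedule.

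\textbf{Step 3 (from TV to logit residuals).} We must convert small TV distance into a small $\|\*\delta_t^i\|$, using concentration. If position $i$ has confidence $p$, then the probability-space perturbation on the off-top-1 tokens has mass at most $1-p$. I would work with the centered logits $\*\ell^i-\log\*\pi^i$-type identities, i.e.\ compare logits via log-probabilities modulo the softmax shift. The aim is to show that, at fixed TV budget, the variation of the log-probabilities on the top token is at most about $D_{\mathrm{TV}}/p$. This goes to $D_{\mathrm{TV}}$ as $p\to1$. Combining with Step 2 then gives a bound on $\|\*\delta_t\|$ (over the masked positions) that is monotone in both the perturbation ratio and the concentration level.

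\textbf{Step 4 (main obstacle).} The hard step is that TV does not control the logits of low-probability tokens, so Step 3 cannot bound the full $\|\*\delta_t\|$ without further assumptions. I would handle this by making the statement precise under an explicit hypothesis: either a Lipschitz bound directly on the logit map, which gives $\|\*\delta_t^i\|\le\kappa C|\mathcal{R}_t|/L$ immediately, or measuring $\|\*\delta_t\|$ only on the probability-weighted (top-mass) coordinates. The remaining premise, that $\*\pi_t^i$ concentrates at later steps, would be taken as an empirical modeling assumption (as the corollary itself phrases it) rather than proved. The conclusion then reads: under nonincreasing $|\mathcal{R}_t|$ and increasing confidence, the bound on $\|\*\delta_t\|$ is nonincreasing along the chain.
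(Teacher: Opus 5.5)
Your proposal follows the same skeleton as the paper's proof: Proposition~\ref{prop:contraction} applied step by step, $|\mathcal{R}_t|\le m_t$ with $m_t$ nonincreasing, and concentration of $\*\pi_t^i$ taken as an unproved modeling premise. It differs in how it treats the two delicate points.

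\textbf{The paper's route.} It loosens $|\mathcal{R}_t|$ to $m_t$ and averages over the $m_{t-1}$ still-masked positions. This gives $\frac{1}{m_{t-1}}\sum_{i\notin\mathcal{R}_t}D_{\mathrm{TV}}(\*\pi_{t-1}^i,\*\pi_t^i)\le\kappa D_{\max}\,m_t/L$, which visibly decays to $0$. Concentration is named as a second mechanism but does not enter that inequality. The decay of this TV bound is then read directly as decay of $\|\*\delta_t\|$.

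\textbf{Your route.} You keep the tighter $|\mathcal{R}_t|$, which gives only a nonincreasing bound (constant under a fixed-$r$ schedule). You use concentration to pass from TV to the top-token log-probability, whose change is at most $D_{\mathrm{TV}}/(p-D_{\mathrm{TV}})$. You also state explicitly that TV does not control the full logit residual, and that only the \emph{bound}, not $\|\*\delta_t\|$ itself, is shown to be monotone.

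\textbf{Comparison.} The paper gets a bound that tends to zero, but only by loosening it, not by adding information. Yours is the more honest statement of what the argument proves, and it is no weaker than what the paper actually establishes. The triangle-inequality drift bound in your Step~2 is harmless but not needed.

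\textbf{Two refinements.}

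First, the ``further assumption'' in your Step~4 is already the standing hypothesis. Since $\*\ell_t=f(\*x_t)$, Lipschitzness of $f$ is itself a Lipschitz bound on the logit map. The paper's proof of Proposition~\ref{prop:contraction} passes through a per-position logit bound before applying the $\tfrac12$-Lipschitz property of softmax. So you can bound $\|\*\delta_t^i\|$ directly and skip Step~3. Note, however, that a $\kappa$-Lipschitz bound in Frobenius norm yields only $\|\*\delta_t^i\|_2\le\kappa C\sqrt{|\mathcal{R}_t|}$ per row. The $1/L$ factor in your ``immediately'' is not implied (the paper inserts it without justification), though this does not affect monotonicity.

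Second, your remark that the off-top-1 perturbation has mass at most $1-p$ needs $\*\pi_{t-1}^i$ to be concentrated on the same token. The same is true of the paper's claim that concentration caps the TV change at $O(\eta)$ ``regardless of the Lipschitz constant.'' A confident prediction can still flip. Without that extra premise, only your top-token estimate survives, and it also relies on the Lipschitz TV bound.
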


We provide the full proofs in Appendix~\ref{app:proofs}. These results justify the design of MRP: the residual $\*\delta_t$ is a low-complexity prediction target, learnable by a module far smaller than the backbone.

\subsection{Architecture}
\label{sec:arch}

The MRP module follows an architectural pattern similar to EAGLE~\citep{li2024eagle} and Multi-Token Prediction (MTP)~\citep{liu2024DeepSeekV3}: a lightweight head that receives hidden states from a frozen backbone and produces additional token predictions. The key difference is that MRP outputs a \emph{residual correction} rather than the entire distribution. The final logits are obtained by summing the backbone logits and the MRP output, $\*\ell_t + \*\delta_t$, so the module only needs to learn the deviation from the backbone's prediction rather than a complete distribution over the vocabulary.
The MRP module takes the backbone's hidden states $\*h$ and the token embeddings of the revealed sequence as input, fuses them via a learned projection, passes the result through a small number of transformer layers, and reuses the backbone's language model head to produce the residual logits $\*\delta_t$. The corrected logits are $\*\ell_t + \*\delta_t$. The full architecture is illustrated in Figure~\ref{fig:1}.

\subsection{Training}
\label{sec:training}

The MRP module is trained to predict the log-density residual between two consecutive denoising states. Given a clean sequence $\*x_0$, we first corrupt it by independently masking response tokens with probability $\mathrm{Uniform}(0,1)$ to obtain the noisy input $\*x_t$. We then reveal $r$ ground-truth tokens per block to form the partially revealed input $\*x_{t-1}$, simulating one denoising step. The frozen backbone $f$ is run twice: once on $\*x_t$ to produce hidden states $\*h$ and log-densities $\log p(\*x_0|\*x_t)$, and once on $\*x_{t-1}$ to produce the teacher target $\log p(\*x_0|\*x_{t-1})$ without gradients. The MRP module $g$ takes $\*x_{t-1}$ and the backbone hidden states $\*h$ as input and predicts the residual $\*\delta_t$. Training encourages $\*\delta_t$ to match the teacher output $\log p(\*x_0|\*x_{t-1}) - \log p(\*x_0|\*x_t)$ via a KL divergence objective restricted to still-masked positions:
\begin{equation}
    \mathcal{L} = \mathrm{KL}\!\Big(\, \mathrm{softmax}\!\big(\*\ell_{t-1}\big) \;\Big\|\; \mathrm{softmax}\!\big(\*\ell_t + \*\delta_t\big) \,\Big)\Big|_{\text{masked}},
\end{equation}
where $\*\ell_t = f(\*x_t)$ and $\*\ell_{t-1} = f(\*x_{t-1})$ are the backbone logits on the noisy and revealed inputs respectively. Since the logits are unnormalized log-densities, and the KL divergence is computed after softmax, the normalizing constants cancel and the objective is equivalent to matching the true conditional distributions.
The only trainable parameters are those of the MRP module $g$. Algorithm~\ref{alg:training} summarizes the full procedure.

\begin{algorithm}[h]
\caption{MRP Training}
\label{alg:training}
\begin{algorithmic}[1]
\Require Backbone $f$ (frozen), MRP module $g$ (trainable)
\For{each training sequence $\*x_0$}
    \State Mask response tokens independently with probability $\mathrm{Uniform}(0,1)$ to obtain $\*x_t$
    \State $\*h,\, \*\ell_t \gets f(\*x_t)$ \Comment{backbone forward: hidden states + logits}
    \State Reveal $r$ ground-truth tokens per block to obtain $\*x_{t-1}$
    \State $\*\delta_t \gets g(\*x_{t},\, \*h)$ \Comment{MRP residual prediction}
    \State $\*\ell_{t-1} \gets f(\*x_{t-1})$ \Comment{Obtain teacher target}
    \State $\mathcal{L} \gets \mathrm{KL}\!\Big(\, \mathrm{softmax}\!\big(\*\ell_{t-1}\big) \;\Big\|\; \mathrm{softmax}\!\big(\*\ell_t + \*\delta_t\big) \,\Big)\Big|_{\text{masked}}$
    \State Update $g$ with $\nabla \mathcal{L}$
\EndFor
\end{algorithmic}
\end{algorithm}

\subsection{Inference}
\label{sec:inference}

At inference time, the MRP module augments each backbone forward pass with additional residual prediction, allowing more tokens to be resolved per backbone forward. After the backbone processes $\*x_t$ and unmasks high-confidence positions to form $\*x_{t-1}$, the MRP module predicts the residual $\*\delta_t$ from the backbone's hidden states and the updated sequence, and the corrected logits $\*\ell_t + \*\delta_t$ approximate a second backbone forward at a fraction of the cost. How this correction is used depends on the decoding regime. In the \emph{static} regime, where each step unmasks a fixed (typically small) number of tokens and quality is the priority, MRP \emph{adds} reveals beyond the threshold through two modes: \emph{direct} decoding, which uses the corrected logits without verification for a tunable quality--throughput tradeoff, and \emph{speculative} decoding, which verifies them against the backbone for lossless acceleration. In the \emph{dynamic} regime, where a confidence threshold unmasks many tokens per step and accuracy degrades, MRP instead \emph{revokes} over-eager reveals through \emph{remasking} decoding. We describe each in turn.

\begin{algorithm}[H]
\caption{MRP Inference (Direct Decoding Mode) in Static Denoising}
\label{alg:inference_direct}
\begin{algorithmic}[1]
\Require backbone $f$, MRP module $g$, MRP steps $K$, number of reveal tokens $r$, fully masked sequence $\*x_T$
\State $t \gets T$
\While{$\*x_t$ contains \texttt{[MASK]} tokens}
    \State $\*h,\, \*\ell \gets f(\*x_t)$ \Comment{backbone forward to produce hidden states and logits}
    \State Reveal $r$ tokens with highest confidence in $\*x_{t}$
    \For{$k = 1, \ldots, K$}
        \State $\*\Delta,\, \*\delta \gets g(\*x_{t},\, \*h)$ \Comment{MRP step to produce hidden residual and logits residual}
        \State $\*h \gets \*h + \*\Delta$;\quad $\*\ell \gets \*\ell + \*\delta$
        \State Reveal $r$ tokens with highest confidence in $\*x_{t}$
    \EndFor
    \State $t \gets t - 1$
\EndWhile
\State \Return $\*x_t$
\end{algorithmic}
\end{algorithm}

\textbf{Direct decoding (Algorithm~\ref{alg:inference_direct}).} The simplest use of MRP in the static regime is to unmask additional positions directly from the corrected logits, without any verification. After the backbone unmasks high-confidence positions, the MRP module runs $K$ additional decoding steps. At each step, the module predicts both a hidden state residual $\*\Delta$ and a logit residual $\*\delta$, which are accumulated into the running hidden states and logits: $\*h \gets \*h + \*\Delta$, $\*\ell \gets \*\ell + \*\delta$. $r$ tokens with highest confidence are unmasked immediately. This introduces a small approximation error. Larger $K$ unmasks more tokens per backbone forward at the cost of compounding approximation error; we study this tradeoff in Section~\ref{sec:experiments}. Direct decoding thus sits on a strictly more favorable quality--throughput frontier than threshold-based decoding at matched throughput, at the price of a controlled quality loss.

\begin{algorithm}[H]
\caption{MRP Inference (Speculative Decoding Mode) in Static Denoising}
\label{alg:inference_spec}
\begin{algorithmic}[1]
\Require backbone $f$, MRP module $g$, MRP steps $K$, number of reveal tokens $r$, fully masked sequence $\*x_T$
\State $t \gets T$
\While{$\*x_t$ contains \texttt{[MASK]} tokens}
    \State $\*h,\, \*\ell \gets f(\*x_t)$ \Comment{backbone forward}
    \State Reveal $r$ tokens with highest confidence in $\*x_{t}$
    \For{$k = 1, \ldots, K$} \Comment{MRP drafting}
        \State $\*\Delta,\, \*\delta \gets g(\*x_{t},\, \*h)$
        \State $\*h \gets \*h + \*\Delta$;\quad $\*\ell \gets \*\ell + \*\delta$
        \State Draft token with highest confidence into $\*x_{t-1}$
    \EndFor
    \State $\*h',\, \*\ell' \gets f(\*x_{t-1})$ \Comment{backbone verify}
    \State Reject drafted positions where $\arg\max \,\ell'^{i} \neq \arg\max \,\ell^{i}$; remask rejected positions in $\*x_{t-1}$
    \State $t \gets t - 1$
\EndWhile
\State \Return $\*x_t$
\end{algorithmic}
\end{algorithm}

\textbf{Speculative decoding (Algorithm~\ref{alg:inference_spec}).} When output quality must be preserved, the same MRP-decoded tokens are used as a draft that the backbone verifies, as in speculative decoding for autoregressive models \citep{liu2024DeepSeekV3}. The drafting steps are identical to direct decoding. But instead of committing the drafts, we adopt the verification framework of Self Speculative Decoding (SSD)~\citep{gao2025SSD}: after all $K$ MRP steps complete, the backbone runs a single verification forward pass on the updated sequence $\*x_{t-1}$, producing $\*\ell' = f(\*x_{t-1})$. Drafted positions where the backbone's prediction matches the draft ($\arg\max\, \ell'^{i} = \arg\max\, \ell^{i}$) are accepted; positions where it disagrees are remasked. Since accepted tokens are exactly those the backbone itself would have produced, acceptance incurs no quality loss, making this mode entirely \emph{lossless}. This position-wise verification is natural for diffusion models, where all positions are predicted simultaneously in a single forward pass. The verification pass is not wasted: its hidden states and logits seed the next iteration, so when the acceptance rate is high, the verification forward effectively serves as the backbone forward for the next step, amortizing its cost.

\begin{algorithm}[H]
\caption{MRP Inference (Remasking Mode) in Dynamic Denoising}
\label{alg:inference_remask}
\begin{algorithmic}[1]
\Require backbone $f$, MRP module $g$, threshold $\tau$, fully masked sequence $\*x_T$
\State $t \gets T$
\While{$\*x_t$ contains \texttt{[MASK]} tokens}
    \State $\*h,\, \*\ell \gets f(\*x_t)$ \Comment{backbone forward}
    \State Reveal all the tokens with confidence exceeding a threshold $\tau$, forming a set $\mathcal{R}$
    \State $\*\Delta,\, \*\delta \gets g(\*x_{t},\, \*h)$
    \State $\*h \gets \*h + \*\Delta$;\quad $\*\ell \gets \*\ell + \*\delta$ \Comment{MRP forward}
    \State Remask positions $i \in \mathcal{R}$ where confidence is below $\tau$ \Comment{remask tokens below the same $\tau$}
    \State $t \gets t - 1$
\EndWhile
\State \Return $\*x_t$
\end{algorithmic}
\end{algorithm}

\textbf{Remasking decoding (Algorithm~\ref{alg:inference_remask}).} The two modes above operate in the static regime and use MRP to \emph{add} reveals beyond what the backbone's threshold yields. In the dynamic, high-throughput regime, a low confidence threshold causes the backbone to unmask many positions simultaneously, and accuracy degrades sharply as a result. 
Here MRP instead \emph{remasks} over-eager reveals. At a low threshold $\tau$, the backbone reveals the set $\mathcal{R}$ of positions whose confidence exceeds $\tau$. A single MRP pass then predicts the residual $\*\delta$ conditioned on these fresh reveals, giving the estimated logits for the subsequent step $\*\ell + \*\delta$. 
We then use the estimated next-step logits to remask revealed tokens in the current step.
The intuition is that a single-step confidence estimate is noisy: a position clearing $\tau$ once may do so spuriously, whereas a position that stays above $\tau$ across two consecutive steps has more stably crossed the threshold, and only such positions are kept committed. This double-check echoes the re-masking principle of WINO~\citep{hong2025WINO}, but replaces its second backbone pass with a single lightweight MRP forward. Using the same $\tau$ for both the reveal and the remasking, it trades one extra MRP forward for higher accuracy at aggressive thresholds, and introduces no tuning beyond what dynamic decoding already requires (Section~\ref{sec:exp:dynamic}).

\textbf{Complementary Operating regimes.} The decoding strategies map onto the two operating regimes of DLM inference rather than competing within one. In the static regime, where few tokens are unmasked per step and quality is paramount, MRP adds reveals: direct decoding trades a controlled quality loss for throughput, while speculative decoding preserves backbone quality through verification. In the dynamic regime, where a low threshold unmasks aggressively for speed, MRP revokes reveals through remasking, recovering most of the accuracy that aggressive unmasking would otherwise sacrifice. All strategies share the same trained MRP module; the choice is made at serving time with no retraining.

\section{Experiments}
\label{sec:experiments}

In this section, we present our main empirical results. We first validate the residual learning hypothesis by showing that predicting the inter-step residual is substantially easier than predicting the full next-step distribution (Section~\ref{sec:exp:residual}). We then evaluate MRP under both budget-based static denoising (Section~\ref{sec:exp:static}) and threshold-based dynamic denoising (Section~\ref{sec:exp:dynamic}), and investigate how the MRP module scales with MRP depth (Section~\ref{sec:exp:scaling}).

\subsection{Setup}
\label{sec:training-setup}

\textbf{Models.} We attach our MRP module to the
SDAR-Chat family of block-diffusion language models~\citep{cheng2025SDAR} at
three scales (1.7B, 4B, 8B), all trained with block size 16. The MRP module is
a $D{=}3$-layer transformer that follows the architecture of
Section~\ref{sec:arch}; the backbone, language-modeling head, and token
embeddings remain \emph{frozen} during training.

\textbf{Dataset.} We adopt the training dataset SFTDatasetV3~\citep{wang2024MambaInLlama},
a 12.4M-conversation supervised fine-tuning corpus assembled from three public instruction-tuning sources:
OpenHermes-2.5~\citep{teknium2024OpenHermes}, GenQA~\citep{chen2024GenQA},
and Infinity-Instruct~\citep{baai2024InfinityInstruct}, and reformatted into a uniform
two- to eight-turn user/assistant message schema. The mixture spans diverse
domains (reasoning, mathematics, code, dialogue) at a scale large enough that
the MRP module never revisits a sample, which is well-suited to the
distillation-style residual objective we train against.

\textbf{Training.} We train with the residual KD loss of
Algorithm~\ref{alg:training} at temperature $T_{\mathrm{KD}}{=}1.0$ in the forward direction
$\mathrm{KL}$, restricted to
still-masked positions, and unroll $K{=}2$ MRP iterations per backward with
uniform per-step weighting. We use AdamW with peak learning rate
$1\mathrm{e}{-3}$ on a cosine schedule for a single epoch. The full
hyperparameter list is given in Appendix~\ref{app:hyperparams}
(Table~\ref{tab:hyperparams}).

\textbf{Evaluation.}
We follow the evaluation protocol of SDAR~\citep{cheng2025SDAR} and focus on the two generation task categories:
\emph{(i) Mathematics} -- GSM8K (0-shot, CoT)~\citep{cobbe2021GSM8K} and
MATH500 (0-shot, CoT)~\citep{hendrycks2021MATH,lightman2024PRM800K};
and \emph{(ii) Code Generation} -- HumanEval~\citep{chen2021HumanEval} and
MBPP~\citep{austin2021MBPP}, both evaluated zero-shot.
Following SDAR, we use greedy decoding for all
models (temperature $1.0$, Top-K${=}1$, Top-P${=}1.0$) with a maximum
generation length of $4096$ tokens. The SDAR-Chat backbones we build on are
the block-size-16 variants, so block length and number of denoising steps per
block are both set to $16$.

\textbf{System.}
We implement MRP in the SGLang inference engine \citep{zheng2024sglang}.
Unless otherwise stated, all the experimental numbers are obtained on a single NVIDIA H100 GPU.

\begin{table}[t!]
    \centering
    \tabfont
    \begin{tabular}{l c rrrr rrrr rrrr}
        \toprule
        & & \multicolumn{4}{c}{\textbf{1.7B}} & \multicolumn{4}{c}{\textbf{4B}} & \multicolumn{4}{c}{\textbf{8B}} \\
        \cmidrule(lr){3-6} \cmidrule(lr){7-10} \cmidrule(lr){11-14}
        \textbf{Type} & & $K{=}1$ & $K{=}2$ & $K{=}3$ & $K{=}4$ & $K{=}1$ & $K{=}2$ & $K{=}3$ & $K{=}4$ & $K{=}1$ & $K{=}2$ & $K{=}3$ & $K{=}4$ \\
        \midrule
        Direct Modeling                 & & $76.5$ & $73.1$ & $62.4$ & $48.0$ & $84.8$ & $19.6$ & $5.9$ & $1.9$ & $90.5$ & $72.0$ & $34.2$ & $6.7$  \\
        Residual Modeling               & & $76.7$ & $73.9$ & $66.7$ & $60.9$ & $88.6$ & $87.6$ & $70.9$ & $57.2$ & $90.1$ & $89.2$ & $84.8$ & $79.8$  \\
        $\Delta$ (Res. $-$ Dir.) & & \textcolor{betterGreen}{+$0.2$} & \textcolor{betterGreen}{+$0.8$} & \textcolor{betterGreen}{+$4.3$} & \textcolor{betterGreen}{+$12.9$} & \textcolor{betterGreen}{+$3.8$} & \textcolor{betterGreen}{+$68.0$} & \textcolor{betterGreen}{+$65.0$} & \textcolor{betterGreen}{+$55.3$} & \textcolor{worseRed}{-$0.4$} & \textcolor{betterGreen}{+$17.2$} & \textcolor{betterGreen}{+$50.6$} & \textcolor{betterGreen}{+$73.1$} \\
        \bottomrule
    \end{tabular}
    \vspace{0.9em}
    \caption{GSM8K (0-shot, CoT) accuracy (\%) of MRP trained with the
    \emph{residual} objective versus a \emph{direct} distillation
    objective, evaluated at $K{=}\{1,2,3,4\}$ MRP steps per backbone forward. Both
    variants share identical architecture and hyperparameters and are
    trained on the same data
    (\texttt{sftdatasetv3}~\citep{wang2024MambaInLlama}); they are
    evaluated under static low-confidence decoding. The $\Delta$ row
    reports the improvement of residual over direct training, with
    \textcolor{betterGreen}{green} indicating residual is better and
    \textcolor{worseRed}{red} indicating residual is worse.}
    \label{tab:residual_ablation}
\end{table}

\begin{table}[t!]
    \centering
    \tabfont
    \begin{tabular}{ll cc cc cc cc}
        \toprule
        \multirow{2}{*}[-0.5ex]{\textbf{Model}} & \multirow{2}{*}[-0.5ex]{\textbf{Setting}}
            & \multicolumn{2}{c}{\textbf{GSM8K}}
            & \multicolumn{2}{c}{\textbf{MATH500}}
            & \multicolumn{2}{c}{\textbf{HumanEval}}
            & \multicolumn{2}{c}{\textbf{MBPP}} \\
        \cmidrule(lr){3-4} \cmidrule(lr){5-6} \cmidrule(lr){7-8} \cmidrule(lr){9-10}
            & & Acc. & TPS & Acc. & TPS & Acc. & TPS & Acc. & TPS \\
        \midrule
        \noalign{\vskip-\belowrulesep}
        \multirow{5}{*}{1.7B}
            & \cellcolor{blue!8}Base ($r{=}1$)          & \cellcolor{blue!8}$77.6$ & \cellcolor{blue!8}$339.0$ & \cellcolor{blue!8}$55.0$ & \cellcolor{blue!8}$358.0$ & \cellcolor{blue!8}$53.7$ & \cellcolor{blue!8}$314.0$ & \cellcolor{blue!8}$52.9$ & \cellcolor{blue!8}$282.0$ \\
            & Base ($r{=}2$)          & $70.8$ & $1.79\times$ & $45.8$ & $1.79\times$ & $29.9$ & $1.70\times$ & $32.3$ & $1.92\times$ \\
            & Direct ($K{=}1$)        & $76.7$ & $1.50\times$ & $54.2$ & $1.45\times$ & $48.2$ & $1.45\times$ & $45.5$ & $1.45\times$ \\
            & Direct ($K{=}2$)        & $73.9$ & $1.71\times$ & $50.6$ & $1.65\times$ & $41.5$ & $1.53\times$ & $42.4$ & $1.72\times$ \\
            & Spec ($K{=}3$)      & $78.3$ & $1.25\times$ & $58.4$ & $1.23\times$ & $55.5$ & $1.23\times$ & $53.3$ & $1.17\times$ \\
        \midrule
        \noalign{\vskip-\belowrulesep}
        \multirow{5}{*}{4B}
            & \cellcolor{blue!8}Base ($r{=}1$)          & \cellcolor{blue!8}$90.3$ & \cellcolor{blue!8}$188.0$ & \cellcolor{blue!8}$71.4$ & \cellcolor{blue!8}$204.0$ & \cellcolor{blue!8}$67.1$ & \cellcolor{blue!8}$170.0$ & \cellcolor{blue!8}$66.5$ & \cellcolor{blue!8}$161.0$ \\
            & Base ($r{=}2$)          & $86.4$ & $1.84\times$ & $64.0$ & $1.81\times$ & $45.7$ & $1.74\times$ & $37.7$ & $1.81\times$ \\
            & Direct ($K{=}1$)        & $88.6$ & $1.56\times$ & $64.4$ & $1.54\times$ & $62.8$ & $1.55\times$ & $55.6$ & $1.43\times$ \\
            & Direct ($K{=}2$)        & $87.6$ & $1.84\times$ & $60.4$ & $1.80\times$ & $53.0$ & $1.79\times$ & $51.0$ & $1.82\times$ \\
            & Spec ($K{=}3$)      & $90.0$ & $1.36\times$ & $68.0$ & $1.26\times$ & $67.7$ & $1.35\times$ & $66.5$ & $1.27\times$ \\
        \midrule
        \noalign{\vskip-\belowrulesep}
        \multirow{5}{*}{8B}
            & \cellcolor{blue!8}Base ($r{=}1$)          & \cellcolor{blue!8}$90.9$ & \cellcolor{blue!8}$126.0$ & \cellcolor{blue!8}$72.2$ & \cellcolor{blue!8}$132.0$ & \cellcolor{blue!8}$73.8$ & \cellcolor{blue!8}$114.0$ & \cellcolor{blue!8}$67.7$ & \cellcolor{blue!8}$102.0$ \\
            & Base ($r{=}2$)          & $87.7$ & $1.80\times$ & $68.4$ & $1.81\times$ & $42.1$ & $1.74\times$ & $48.2$ & $1.60\times$ \\
            & Direct ($K{=}1$)        & $90.1$ & $1.59\times$ & $71.4$ & $1.61\times$ & $67.1$ & $1.53\times$ & $63.8$ & $1.51\times$ \\
            & Direct ($K{=}2$)        & $89.2$ & $1.89\times$ & $70.8$ & $1.91\times$ & $64.0$ & $1.78\times$ & $59.9$ & $1.75\times$ \\
            & Spec ($K{=}3$)      & $90.4$ & $1.40\times$ & $74.8$ & $1.39\times$ & $72.6$ & $1.34\times$ & $67.3$ & $1.34\times$ \\
        \bottomrule
    \end{tabular}
    \vspace{0.5em}
    \caption{MRP under static denoising across model scales. \emph{Base ($r$)} denotes the
    backbone-only baseline that unmasks $r$ tokens per denoising step.
    \emph{Direct} mode runs MRP with $K{=}\{1,2\}$ steps; \emph{speculative} mode runs
    MRP with $K{=}3$ steps. }
    \label{tab:main}
\end{table}

\textbf{Denoising Mode.}
We evaluate MRP under both denoising modes described in SDAR~\citep{cheng2025SDAR}. In \emph{static denoising}, each denoising step unmasks a fixed number of $r$ positions per block by selecting the highest-confidence predictions. In \emph{dynamic denoising}, all positions whose confidence exceeds a threshold $\tau$ are unmasked simultaneously, with a minimum of one position revealed per step to guarantee progress.

\subsection{Residual Modeling}
\label{sec:exp:residual}
We start by comparing two modeling variants trained with identical protocols: (i) \emph{Residual Modeling}, which predicts $\*\delta_t = \log p(\*x_0|\*x_{t-1}) - \log p(\*x_0|\*x_t)$ on top of the frozen backbone logits, and (ii) \emph{Direct Modeling}, which is distilled to predict $\log p(\*x_0|\*x_{t-1})$ directly without the residual reparameterization. Table~\ref{tab:residual_ablation} reports GSM8K accuracy under direct decoding ($\tau{=}1.0$) at $K\in\{1,2,3,4\}$ MRP steps across all three SDAR scales. The two objectives are close at $K{=}1$ (gaps of $+0.2$, $+3.8$, and $-0.4$ on 1.7B, 4B, and 8B), but the gap opens sharply as $K$ increases: at $K{=}2$ residual training already leads by $+68.0$ on 4B and $+17.2$ on 8B, and by $K{=}4$ the direct model has collapsed at every scale ($\Delta$ of $+12.9$, $+55.3$, and $+73.1$). The reason is that direct distillation must reproduce the full log-density at every iteration, so per-step errors compound across steps, whereas residual training only models a low-magnitude correction and extrapolates more reliably to deeper MRP steps.
This substantiates our motivation for residual modeling.

\subsection{MRP Inference under Static Denoising}
\label{sec:exp:static}

Table~\ref{tab:main} reports the static denoising results. For each task we compare the unmodified
backbone (\emph{Base}, $r{=}1$ token unmasked per step) against our
\emph{Direct} mode at $K{\in}\{1,2\}$ MRP steps and our \emph{Spec}
mode at $K{=}3$, and report task accuracy together with end-to-end
throughput.

\textbf{Speculative mode preserves quality at meaningful speedup.} In every setting, the speculative mode has no quality drop as expected. It delivers
$1.17\text{--}1.40\times$ speed up in throughput across all configurations, with the largest absolute speedups on the larger backbone where
the verification cost is best amortized (i.e., $1.40\times$ on 8B/GSM8K).

\textbf{Direct mode buys further throughput at a controlled quality
cost.} At $K{=}1$, the direct mode has close quality compared to backbone
(e.g., 88.6 vs.\ 90.3 GSM8K at 4B; 90.1 vs.\ 90.9 at 8B) while running
$1.43\text{--}1.61\times$ faster. Pushing to $K{=}2$ adds another step
of speedup ($1.53\text{--}1.91\times$) with a modest accuracy hit
concentrated on code benchmarks (e.g., HumanEval). The clean
monotonic tradeoff between $K$ and accuracy lets practitioners pick the
operating point: \emph{Speculative Mode} when quality is non-negotiable, \emph{Direct Mode}
($K{=}1$ or $2$) when latency dominates.

\subsection{MRP Inference under Dynamic Denoising}
\label{sec:exp:dynamic}

Table~\ref{tab:remask-acc} evaluates the remasking mode (Algorithm~\ref{alg:inference_remask}) under threshold-based dynamic denoising. We sweep the unmasking threshold $\tau \in \{0.5, 0.6, 0.7, 0.8, 0.9\}$ on SDAR-1.7B/4B/8B. For each $\tau$ we compare plain threshold-based \emph{Dynamic} decoding against \emph{MRP Remask}, reporting task accuracy on GSM8K, MATH500, HumanEval, and MBPP together with the accuracy change over Dynamic at the matched $\tau$. As described in Section~\ref{sec:inference}, the reveal and the remasking share the same $\tau$, so it introduces no tuning beyond what dynamic decoding already requires. The intuition is that a single forward pass gives a noisy confidence estimate: a position may clear $\tau$ once by chance, but a position that stays above $\tau$ across two consecutive estimates has more stably crossed the threshold. MRP Remask keeps only these doubly-confirmed positions and revokes the rest, filtering out the premature reveals that a single noisy estimate would commit. We deliberately study low thresholds: these make the backbone unmask many positions per step, which is precisely the regime where simultaneous reveals are most likely to be premature and where MRP remasking has the most to correct.

In addition, in our experiments we adopt the \emph{max-accept} trick from WINO~\citep{hong2025WINO}, which caps the number of positions unmasked per step at $m = \min(\max(\lfloor 0.7\,n\rfloor, 5),\, L)$, where $n$ is the number of still-masked positions in the block and $L$ denotes the block size; when more than $m$ positions clear $\tau$, we keep the $m$ highest-confidence ones.

\begin{table}[H]
    \centering
    \tabfont
    \begin{tabular}{cl ll ll ll ll}
        \toprule
        \multirow{2}{*}[-0.5ex]{\textbf{Model}} & \multirow{2}{*}[-0.5ex]{\textbf{$\tau$}}
            & \multicolumn{2}{c}{\textbf{GSM8K}}
            & \multicolumn{2}{c}{\textbf{MATH500}}
            & \multicolumn{2}{c}{\textbf{HumanEval}}
            & \multicolumn{2}{c}{\textbf{MBPP}} \\
        \cmidrule(lr){3-4} \cmidrule(lr){5-6} \cmidrule(lr){7-8} \cmidrule(lr){9-10}
            & & Dyn. & Remask & Dyn. & Remask & Dyn. & Remask & Dyn. & Remask \\
        \midrule
        \multirow{5}{*}{1.7B}
            & $0.5$ & $41.6$ & $59.1$\dgain{17.5} & $26.0$ & $37.4$\dgain{11.4} & $17.7$ & $28.7$\dgain{11.0} & $26.9$ & $41.3$\dgain{14.4} \\
            & $0.6$ & $56.3$ & $67.0$\dgain{10.7} & $33.4$ & $40.4$\dgain{7.0} & $31.7$ & $43.3$\dgain{11.6} & $42.4$ & $49.0$\dgain{6.6} \\
            & $0.7$ & $65.4$ & $71.8$\dgain{6.4} & $39.4$ & $48.6$\dgain{9.2} & $40.9$ & $45.1$\dgain{4.2} & $49.8$ & $51.0$\dgain{1.2} \\
            & $0.8$ & $70.6$ & $75.4$\dgain{4.8} & $47.4$ & $52.0$\dgain{4.6} & $45.7$ & $48.8$\dgain{3.1} & $51.8$ & $51.8$\dnil \\
            & $0.9$ & $76.2$ & $77.3$\dgain{1.1} & $51.2$ & $57.0$\dgain{5.8} & $49.4$ & $52.4$\dgain{3.0} & $53.7$ & $54.1$\dgain{0.4} \\
        \midrule
        \multirow{5}{*}{4B}
            & $0.5$ & $63.4$ & $81.1$\dgain{17.7} & $44.2$ & $58.4$\dgain{14.2} & $32.3$ & $53.1$\dgain{20.8} & $38.5$ & $50.6$\dgain{12.1} \\
            & $0.6$ & $76.4$ & $85.5$\dgain{9.1} & $53.6$ & $61.4$\dgain{7.8} & $49.4$ & $57.9$\dgain{8.5} & $49.8$ & $57.2$\dgain{7.4} \\
            & $0.7$ & $84.6$ & $88.5$\dgain{3.9} & $60.4$ & $65.6$\dgain{5.2} & $60.4$ & $62.2$\dgain{1.8} & $61.1$ & $63.4$\dgain{2.3} \\
            & $0.8$ & $87.9$ & $90.1$\dgain{2.2} & $66.8$ & $70.6$\dgain{3.8} & $64.6$ & $62.8$\dloss{-1.8} & $63.8$ & $64.2$\dgain{0.4} \\
            & $0.9$ & $88.5$ & $90.1$\dgain{1.6} & $69.0$ & $70.6$\dgain{1.6} & $67.1$ & $65.9$\dloss{-1.2} & $65.4$ & $64.6$\dloss{-0.8} \\
        \midrule
        \multirow{5}{*}{8B}
            & $0.5$ & $67.9$ & $82.3$\dgain{14.4} & $45.2$ & $58.0$\dgain{12.8} & $32.3$ & $54.9$\dgain{22.6} & $34.6$ & $49.4$\dgain{14.8} \\
            & $0.6$ & $79.6$ & $86.8$\dgain{7.2} & $54.8$ & $63.8$\dgain{9.0} & $48.8$ & $63.4$\dgain{14.6} & $48.3$ & $59.9$\dgain{11.6} \\
            & $0.7$ & $85.9$ & $89.0$\dgain{3.1} & $60.8$ & $69.0$\dgain{8.2} & $64.6$ & $72.6$\dgain{8.0} & $54.9$ & $60.3$\dgain{5.4} \\
            & $0.8$ & $89.3$ & $91.0$\dgain{1.7} & $68.0$ & $70.2$\dgain{2.2} & $74.4$ & $75.0$\dgain{0.6} & $62.3$ & $66.5$\dgain{4.2} \\
            & $0.9$ & $90.8$ & $91.4$\dgain{0.6} & $70.0$ & $72.0$\dgain{2.0} & $75.0$ & $75.0$\dnil & $66.9$ & $68.5$\dgain{1.6} \\
        \bottomrule
    \end{tabular}
    \vspace{0.5em}
    \caption{\textbf{MRP remasking improves the accuracy of low-threshold dynamic decoding.}
    For each SDAR backbone (1.7B/4B/8B) and unmasking threshold $\tau$, we report task
    accuracy (\%, $\uparrow$) for plain threshold-based \emph{Dynamic} decoding (\emph{Dyn.};
    our baseline) and for \emph{MRP Remask} (\emph{Remask}; our approach), which reuses the
    MRP-corrected confidence to remask over-eager reveals with the \emph{same} $\tau$ for both
    the reveal and the remask (no extra tuning). Next to each Remask value we report its
    accuracy change over Dynamic at matched $\tau$ (\textcolor{betterGreen}{green}: gain,
    \textcolor{worseRed}{red}: loss).}
    \label{tab:remask-acc}
\end{table}

\textbf{Remasking recovers the accuracy lost to aggressive unmasking.} At low thresholds the backbone commits many positions in a single step, before their neighbors are revealed, and Dynamic accuracy degrades sharply as a result. MRP Remask reverses most of this loss: at $\tau = 0.5$ it improves accuracy by at least $11$ points at every task and scale, and by up to $+22.6$ (e.g., $32.3 \to 54.9$ on 8B HumanEval, $63.4 \to 81.1$ on 4B GSM8K, $26.0 \to 37.4$ on 1.7B MATH500). The improvement shrinks as $\tau$ rises: by $\tau = 0.9$ the backbone already unmasks conservatively, leaving little for MRP remasking to revoke, and the gain narrows to near zero. MRP remasking thus acts exactly where it is needed---when the backbone is over-eager---and stays out of the way otherwise.

\textbf{Reasoning improves uniformly; the rare regressions are small and confined to code.} On the reasoning benchmarks (GSM8K, MATH500), every operating point improves, without a single exception across thresholds or model scales, indicating that the inter-step residual reliably identifies premature reveals for natural-language reasoning. On code generation the gains are equally large at aggressive thresholds, but a few minor regressions surface at conservative thresholds (e.g., $-1.8$ and $-1.2$ on 4B HumanEval at $\tau = 0.8$ and $0.9$, and $-0.8$ on 4B MBPP at $\tau = 0.9$). Code tokens undergo sharper, less predictable distributional shifts between denoising steps, which makes the residual harder to estimate; where Dynamic is already near its ceiling, an imperfect estimate can occasionally revoke a correct reveal. The effect is small and limited to the high-$\tau$ regime where remasking is least needed.

\textbf{The benefit is consistent across model scale.} All three backbones follow the same profile---large gains at low $\tau$ that taper to near zero at high $\tau$---so remasking does not wash out as the backbone strengthens: even the 8B model gains $+14.4$ on GSM8K and $+22.6$ on HumanEval at $\tau = 0.5$. In effect, remasking lets a practitioner run at an aggressive, low-latency threshold while retaining most of the accuracy of a conservative one, at the cost of a single additional MRP forward per step; we report the corresponding decoding throughput in Table~\ref{tab:remask-tps} (Appendix~\ref{app:remask-throughput}).

\vspace{-0.5em}
\subsection{Scaling Law}
\label{sec:exp:scaling}

\vspace{-0.5em}

Holding the backbone fixed at SDAR-1.7B, we vary the number of MRP transformer
layers $D \in \{1, 2, 3, 4, 8\}$. A deeper MRP module has more capacity to
predict the residual but adds non-trivial cost per inference step.
Figure~\ref{fig:ablation-depth} reports accuracy and throughput across the four
benchmarks for both \emph{direct} ($K{\in}\{1,2\}$) and \emph{speculative}
($K{\in}\{2,3\}$) decoding modes.

\begin{figure}[t!]
    \centering
    \includegraphics[width=\textwidth]{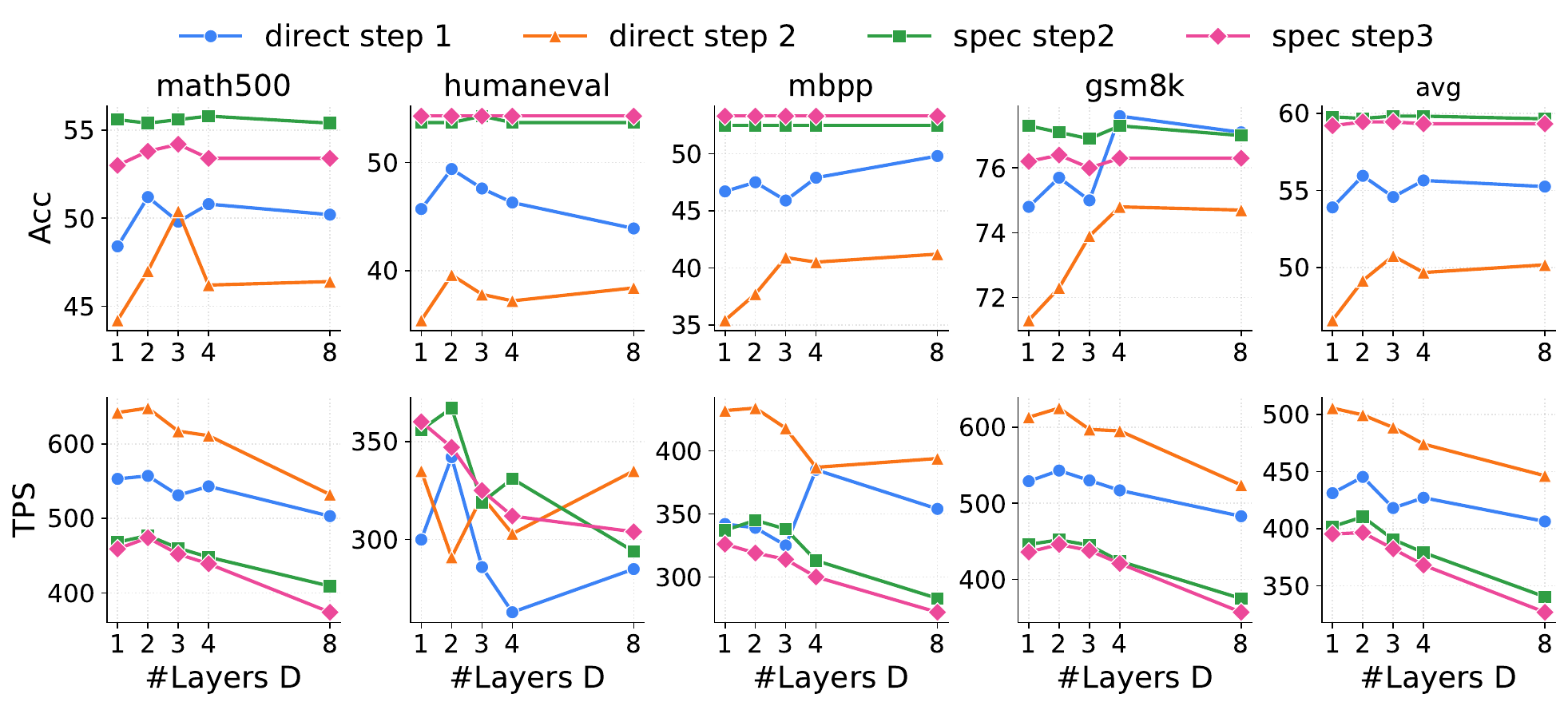}
    \caption{Effect of MRP depth on SDAR-1.7B. We sweep the number of MRP
    transformer layers $D \in \{1, 2, 3, 4, 8\}$ on the $x$-axis and report,
    per benchmark column (and an averaged column at the right), accuracy on
    the top row and decoding throughput on the bottom row.
    Each subplot draws four curves corresponding to the four decoding
    configurations: \emph{direct} with $K{\in}\{1,2\}$ MRP steps and
    \emph{speculative} with $K{\in}\{2,3\}$ MRP steps.
    The underlying numerical values are reported in
    Table~\ref{tab:layer-sweep-1p7b} (Appendix~\ref{app:layer-sweep}).}
    \label{fig:ablation-depth}
\end{figure}

\textbf{A small module is enough.} Adding MRP layers improves direct-decoding accuracy with diminishing returns: the gains are clearest for $K{=}2$ (the four-benchmark average rises by about four points from $D{=}1$ to $D{=}3$) and on GSM8K (where direct accuracy keeps climbing to a peak at $D{=}4$), but for $K{=}1$ they are small and non-monotonic, and speculative accuracy is essentially flat across depth. Throughput, by contrast, falls steadily as each added layer increases the per-step latency of the MRP forward pass, so the best speed--quality tradeoff sits at small depth---a few layers recover essentially all of the achievable accuracy, and the 8-layer variant only trades throughput for no meaningful accuracy gain. This is consistent with our theoretical characterization: the inter-step residual has limited complexity, and a small module suffices to capture the bulk of the signal. HumanEval is an exception: its short, single-function generations decode comparatively few tokens per problem, yielding the least stable throughput estimates across depths.

\textbf{The optimal depth depends on the inference mode.} In direct decoding, where MRP predictions are used without verification, deeper modules improve quality more noticeably: the accuracy gap between 1-layer and 3-layer MRP is larger than in speculative mode. In speculative decoding, verification corrects MRP errors, so a shallower (and faster) module can achieve comparable final quality while maintaining higher throughput. This suggests a practical guideline: use deeper MRP for direct decoding when quality is prioritized, and shallower MRP for speculative decoding when lossless acceleration is the goal.

\vspace{-0.5em}
\section{Conclusion and Future Work}
\label{sec:conclusion}

We introduced Multi-token Residual Prediction (MRP), a lightweight module that accelerates diffusion language model inference by predicting the logit residual between adjacent denoising steps. MRP supports three inference modes: direct decoding for a tunable quality--speed tradeoff, speculative decoding for lossless acceleration, and remasking decoding for improved accuracy under aggressive low-threshold decoding. Experiments on SDAR models at three scales demonstrate up to $1.40\times$ lossless speedup in SGLang, with consistent gains across reasoning and code generation benchmarks. All training data, code, and model weights are publicly available.

Several directions remain open. First, our dynamic decoding results show that MRP is most effective under conservative thresholds, where the backbone unmasks few tokens per step. Co-designing the threshold schedule with MRP, for instance using a more aggressive base threshold while relying on MRP's predictions to recover quality, could unlock stronger acceleration without sacrificing accuracy. Second, the current MRP module uses a fixed number of residual steps $K$ throughout generation. An adaptive schedule that allocates more MRP steps when the backbone is likely to unmask few tokens, and skips MRP entirely when the backbone is already making rapid progress, would better match compute to difficulty. Third, while MRP performs robustly on reasoning tasks, we observe larger quality degradation on code generation, suggesting that the inter-step residual structure differs across domains. Understanding and addressing this gap, potentially through domain-aware training or architecture modifications, is an important next step. 

\section*{Acknowledgements}
This work was supported in part through STCSM 25PJA108 and NYU IT High Performance Computing resources, services, and staff expertise. We gratefully acknowledge Nous Research and Modal for providing the GPU compute.

\iftoggle{arxiv}{
\printbibliography
}{
\bibliography{biblio}

@inproceedings{austin2021D3PM,
  author    = {Jacob Austin and Daniel D. Johnson and Jonathan Ho and Daniel Tarlow and Rianne van den Berg},
  title     = {Structured Denoising Diffusion Models in Discrete State-Spaces},
  booktitle = {NeurIPS},
  year      = {2021}
}

@inproceedings{lou2024SEDD,
  author    = {Aaron Lou and Chenlin Meng and Stefano Ermon},
  title     = {Discrete Diffusion Modeling by Estimating the Ratios of the Data Distribution},
  booktitle = {ICML},
  year      = {2024}
}

@inproceedings{sahoo2024MDLM,
  author    = {Subham Sekhar Sahoo and Marianne Arriola and Yair Schiff and Aaron Gokaslan and Edgar Marroquin and Justin T. Chiu and Alexander Rush and Volodymyr Kuleshov},
  title     = {Simple and Effective Masked Diffusion Language Models},
  booktitle = {NeurIPS},
  year      = {2024}
}

@inproceedings{arriola2025BD3LM,
  author    = {Marianne Arriola and Aaron Gokaslan and Justin T. Chiu and Zhihan Yang and Zhixuan Qi and Jiaqi Han and Subham Sekhar Sahoo and Volodymyr Kuleshov},
  title     = {Block Diffusion: Interpolating Between Autoregressive and Diffusion Language Models},
  booktitle = {ICLR},
  year      = {2025}
}

@inproceedings{nie2025LLaDA,
  author    = {Shen Nie and Fengqi Zhu and Zebin You and Xiaolu Zhang and Jingyang Ou and Jun Hu and Jun Zhou and Yankai Lin and Ji-Rong Wen and Chongxuan Li},
  title     = {Large Language Diffusion Models},
  booktitle = {NeurIPS},
  year      = {2025}
}

@article{ye2025Dream,
  author  = {Jiacheng Ye and Zhihui Xie and Lin Zheng and Jiahui Gao and Zirui Wu and Xin Jiang and Zhenguo Li and Lingpeng Kong},
  title   = {Dream 7B: Diffusion Large Language Models},
  journal = {arXiv preprint arXiv:2508.15487},
  year    = {2025}
}

@misc{teknium2024OpenHermes,
  author       = {Teknium},
  title        = {{OpenHermes 2.5}: An Open Dataset of Synthetic Data for Generalist {LLM} Assistants},
  year         = {2024},
  howpublished = {\url{https://huggingface.co/datasets/teknium/OpenHermes-2.5}}
}

@article{chen2024GenQA,
  author  = {Jiuhai Chen and Rifaa Qadri and Yuxin Wen and Neel Jain and John Kirchenbauer and Tianyi Zhou and Tom Goldstein},
  title   = {{GenQA}: Generating Millions of Instructions from a Handful of Prompts},
  journal = {arXiv preprint arXiv:2406.10323},
  year    = {2024}
}

@misc{baai2024InfinityInstruct,
  author       = {{BAAI}},
  title        = {{Infinity Instruct}: Scaling Instruction Selection and Synthesis to Enhance Language Models},
  year         = {2024},
  howpublished = {\url{https://huggingface.co/datasets/BAAI/Infinity-Instruct}}
}

@inproceedings{wang2024MambaInLlama,
  author    = {Junxiong Wang and Daniele Paliotta and Avner May and Alexander M. Rush and Tri Dao},
  title     = {The {Mamba} in the {Llama}: Distilling and Accelerating Hybrid Models},
  booktitle = {NeurIPS},
  year      = {2024}
}

@article{cheng2025SDAR,
  author  = {Shuang Cheng and Yihan Bian and Dawei Liu and Linfeng Zhang and Qian Yao and Zhongbo Tian and Wenhai Wang and Qipeng Guo and Kai Chen and Biqing Qi and Bowen Zhou},
  title   = {{SDAR}: A Synergistic Diffusion-AutoRegression Paradigm for Scalable Sequence Generation},
  journal = {arXiv preprint arXiv:2510.06303},
  year    = {2025}
}

@inproceedings{wang2025DiffusionForcing,
  author    = {Xu Wang and Chenkai Xu and Yijie Jin and Jiachun Jin and Hao Zhang and Zhijie Deng},
  title     = {Diffusion {LLMs} Can Do Faster-Than-{AR} Inference via Discrete Diffusion Forcing},
  booktitle = {ICLR},
  year      = {2026}
}

@inproceedings{wu2025FastdLLM,
  author    = {Chengyue Wu and Hao Zhang and Shuchen Xue and Zhijian Liu and Shizhe Diao and Ligeng Zhu and Ping Luo and Song Han and Enze Xie},
  title     = {Fast-d{LLM}: Training-free Acceleration of Diffusion {LLM} by Enabling {KV} Cache and Parallel Decoding},
  booktitle = {ICLR},
  year      = {2026}
}

@article{hong2025WINO,
  author  = {Feng Hong and Geng Yu and Yushi Ye and Haicheng Huang and Huangjie Zheng and Ya Zhang and Yanfeng Wang and Jiangchao Yao},
  title   = {Wide-In, Narrow-Out: Revokable Decoding for Efficient and Effective {DLLMs}},
  journal = {arXiv preprint arXiv:2507.18578},
  year    = {2025}
}

@inproceedings{li2025Prophet,
  author    = {Pengxiang Li and Yefan Zhou and Dilxat Muhtar and Lu Yin and Shilin Yan and Li Shen and Soroush Vosoughi and Shiwei Liu},
  title     = {Diffusion Language Models Know the Answer Before Decoding},
  booktitle = {NeurIPS},
  year      = {2025}
}

@article{mohamed2025SchED,
  author  = {Amr Mohamed and Yang Zhang and Michalis Vazirgiannis and Guokan Shang},
  title   = {Fast-Decoding Diffusion Language Models via Progress-Aware Confidence Schedules},
  journal = {arXiv preprint arXiv:2512.02892},
  year    = {2025}
}

@article{agrawal2025Spiffy,
  author  = {Sudhanshu Agrawal and Risheek Garrepalli and Raghavv Goel and Mingu Lee and Christopher Lott and Fatih Porikli},
  title   = {Spiffy: Multiplying Diffusion {LLM} Acceleration via Lossless Speculative Decoding},
  journal = {arXiv preprint arXiv:2509.18085},
  year    = {2025}
}

@inproceedings{christopher2025SpecDiff,
  author    = {Jacob K. Christopher and Brian R. Bartoldson and Tal Ben-Nun and Michael Cardei and Bhavya Kailkhura and Ferdinando Fioretto},
  title     = {Speculative Diffusion Decoding: Accelerating Language Generation through Diffusion},
  booktitle = {NAACL},
  year      = {2025}
}

@inproceedings{israel2025APD,
  author    = {Daniel Israel and Guy Van den Broeck and Aditya Grover},
  title     = {Accelerating Diffusion {LLMs} via Adaptive Parallel Decoding},
  booktitle = {NeurIPS},
  year      = {2025}
}

@inproceedings{seo2025Conv,
  author    = {Yeongbin Seo and Dongha Lee and Jaehyung Kim and Jinyoung Yeo},
  title     = {Fast and Fluent Diffusion Language Models via Convolutional Decoding and Rejective Fine-tuning},
  booktitle = {NeurIPS},
  year      = {2025}
}

@inproceedings{bao2026Learn2PD,
  author    = {Wenrui Bao and Zhiben Chen and Dan Xu and Yuzhang Shang},
  title     = {Learning to Parallel: Accelerating Diffusion Large Language Models via Learnable Parallel Decoding},
  booktitle = {ICLR},
  year      = {2026}
}

@article{li2025DiffuSpec,
  author  = {Guanghao Li and Zhihui Fu and Min Fang and Qibin Zhao and Ming Tang and Chun Yuan and Jun Wang},
  title   = {Diffu{S}pec: Unlocking Diffusion Language Models for Speculative Decoding},
  journal = {arXiv preprint arXiv:2510.02358},
  year    = {2025}
}

@article{gao2025SSD,
  author  = {Yifeng Gao and Ziang Ji and Yuxuan Wang and Biqing Qi and Hanlin Xu and Linfeng Zhang},
  title   = {Self Speculative Decoding for Diffusion Large Language Models},
  journal = {arXiv preprint arXiv:2510.04147},
  year    = {2025}
}

@inproceedings{brown2020GPT3,
  author    = {Tom Brown and Benjamin Mann and Nick Ryder and Melanie Subbiah and Jared D. Kaplan and Prafulla Dhariwal and Arvind Neelakantan and Pranav Shyam and Girish Sastry and Amanda Askell and others},
  title     = {Language Models are Few-Shot Learners},
  booktitle = {NeurIPS},
  year      = {2020}
}

@article{yang2025qwen3,
  author  = {An Yang and Anfeng Li and Baosong Yang and Beichen Zhang and Binyuan Hui and Bo Zheng and Bowen Yu and Chang Gao and Chengen Huang and Chenxu Lv and others},
  title   = {{Qwen3} Technical Report},
  journal = {arXiv preprint arXiv:2505.09388},
  year    = {2025}
}

@article{liu2024DeepSeekV3,
  author  = {Aixin Liu and Bei Feng and Bing Xue and Bingxuan Wang and Bochao Wu and Chengda Lu and Chenggang Zhao and Chengqi Deng and Chenyu Zhang and Chong Ruan and others},
  title   = {{DeepSeek-V3} Technical Report},
  journal = {arXiv preprint arXiv:2412.19437},
  year    = {2024}
}

@inproceedings{zheng2024sglang,
  author    = {Lianmin Zheng and Liangsheng Yin and Zhiqiang Xie and Chuyue Sun and Jeff Huang and Cody H. Yu and Shiyi Cao and Christos Kozyrakis and Ion Stoica and Joseph E. Gonzalez and others},
  title     = {{SGLang}: Efficient Execution of Structured Language Model Programs},
  booktitle = {NeurIPS},
  year      = {2024}
}

@article{li2024eagle,
  author  = {Yuhui Li and Fangyun Wei and Chao Zhang and Hongyang Zhang},
  title   = {{EAGLE}: Speculative Sampling Requires Rethinking Feature Uncertainty},
  journal = {arXiv preprint arXiv:2401.15077},
  year    = {2024}
}

@article{nair2025softmaxLipschitz,
  author  = {Pravin Nair},
  title   = {Softmax is {$1/2$}-{L}ipschitz: A Tight Bound Across All {$\ell_p$} Norms},
  journal = {arXiv preprint arXiv:2510.23012},
  year    = {2025}
}

@article{cobbe2021GSM8K,
  author  = {Karl Cobbe and Vineet Kosaraju and Mohammad Bavarian and Mark Chen and Heewoo Jun and Lukasz Kaiser and Matthias Plappert and Jerry Tworek and Jacob Hilton and Reiichiro Nakano and Christopher Hesse and John Schulman},
  title   = {Training Verifiers to Solve Math Word Problems},
  journal = {arXiv preprint arXiv:2110.14168},
  year    = {2021}
}

@inproceedings{hendrycks2021MATH,
  author    = {Dan Hendrycks and Collin Burns and Saurav Kadavath and Akul Arora and Steven Basart and Eric Tang and Dawn Song and Jacob Steinhardt},
  title     = {Measuring Mathematical Problem Solving with the {MATH} Dataset},
  booktitle = {NeurIPS Datasets and Benchmarks},
  year      = {2021}
}

@inproceedings{lightman2024PRM800K,
  author    = {Hunter Lightman and Vineet Kosaraju and Yura Burda and Harri Edwards and Bowen Baker and Teddy Lee and Jan Leike and John Schulman and Ilya Sutskever and Karl Cobbe},
  title     = {Let's Verify Step by Step},
  booktitle = {ICLR},
  year      = {2024}
}

@article{chen2021HumanEval,
  author  = {Mark Chen and Jerry Tworek and Heewoo Jun and Qiming Yuan and Henrique Pondé de Oliveira Pinto and Jared Kaplan and Harri Edwards and Yuri Burda and Nicholas Joseph and Greg Brockman and others},
  title   = {Evaluating Large Language Models Trained on Code},
  journal = {arXiv preprint arXiv:2107.03374},
  year    = {2021}
}

@article{austin2021MBPP,
  author  = {Jacob Austin and Augustus Odena and Maxwell Nye and Maarten Bosma and Henryk Michalewski and David Dohan and Ellen Jiang and Carrie Cai and Michael Terry and Quoc Le and Charles Sutton},
  title   = {Program Synthesis with Large Language Models},
  journal = {arXiv preprint arXiv:2108.07732},
  year    = {2021}
}

@article{hu2026RCDLM,
  author  = {Yuezhou Hu and Harman Singh and Monishwaran Maheswaran and Haocheng Xi and Coleman Hooper and Jintao Zhang and Aditya Tomar and Michael W. Mahoney and Sewon Min and Mehrdad Farajtabar and others},
  title   = {Residual Context Diffusion Language Models},
  journal = {arXiv preprint arXiv:2601.22954},
  year    = {2026}
}

@article{wu2025FastdLLMv2,
  author  = {Chengyue Wu and Hao Zhang and Shuchen Xue and Shizhe Diao and Yonggan Fu and Zhijian Liu and Pavlo Molchanov and Ping Luo and Song Han and Enze Xie},
  title   = {Fast-d{LLM} v2: Efficient Block-Diffusion {LLM}},
  journal = {arXiv preprint arXiv:2509.26328},
  year    = {2025}
}

@inproceedings{he2016ResNet,
  author    = {Kaiming He and Xiangyu Zhang and Shaoqing Ren and Jian Sun},
  title     = {Deep Residual Learning for Image Recognition},
  booktitle = {CVPR},
  year      = {2016}
}
}

\newpage

\appendix

\section{Related Work}
\label{sec:related}

\subsection{Diffusion Language Models}
\label{sec:related-dlm}

Discrete-sequence diffusion was formalized by D3PM~\citep{austin2021D3PM}, whose absorbing-state ($[\texttt{MASK}]$) variant seeded masked diffusion language modeling. SEDD~\citep{lou2024SEDD} and MDLM~\citep{sahoo2024MDLM} subsequently sharpened the training objective, with MDLM's Rao-Blackwellized loss reducing masked diffusion training to a weighted mixture of masked language modeling losses. This recipe enabled the first wave of large-scale DLMs. LLaDA~\citep{nie2025LLaDA} scaled masked diffusion to 8B parameters and matched strong AR LLMs on broad reasoning benchmarks. BD3-LM~\citep{arriola2025BD3LM} introduced a block-causal design that generates tokens block-by-block with discrete diffusion within each block, addressing fixed-length generation and the absence of KV caching. This paradigm has since become dominant, with Dream 7B~\citep{ye2025Dream}, D2F~\citep{wang2025DiffusionForcing}, and most recently SDAR~\citep{cheng2025SDAR} further validating the recipe at scale. SDAR is particularly relevant to our work: it lightly adapts a pre-trained AR backbone into a block-diffusion sampler, and serves as the backbone for all experiments in this paper. Our contribution is orthogonal to these efforts: we accelerate \emph{inference} of any such backbone rather than proposing a new training paradigm.

\subsection{Fast Decoding for Diffusion Language Models}
\label{sec:related-fast}

As discussed in Section~\ref{sec:intro}, naively unmasking many tokens per step degrades quality because the unmasking decisions at different positions are made simultaneously from a single forward pass. Existing approaches to accelerating DLM inference can be organized into two families: \emph{threshold-based} methods, which use the backbone's own per-token confidence as the unmasking criterion, and \emph{speculative} methods, which introduce auxiliary mechanisms such as drafting, learned predictors, or structural priors to decode additional tokens beyond what confidence alone would permit.

\paragraph{Threshold-based methods.}
Fast-dLLM~\citep{wu2025FastdLLM} established the standard recipe: at each denoising step, unmask every position whose top-1 confidence exceeds a fixed threshold $\tau$, and enable a KV cache for block-diffusion models. Subsequent work refines this recipe along several axes. WINO~\citep{hong2025WINO} allows the bidirectional context to re-mask previously committed tokens in later steps, relaxing the irrevocable commitment of Fast-dLLM. Prophet~\citep{li2025Prophet} takes the opposite approach, committing all remaining tokens at once when a global confidence margin indicates that further refinement would be wasteful. SchED~\citep{mohamed2025SchED} generalizes the static threshold to a progress-aware schedule $\tau(t)$ that aggregates full-span logit margins. Despite these advances, all threshold-based methods rely on backbone confidence as the sole unmasking signal, and therefore cannot decode tokens whose confidence has not yet crossed the threshold, even when the inter-step residual suggests they are already close to their final value.

\paragraph{Speculative methods.}
A complementary line of work introduces explicit drafting mechanisms. SpecDiff~\citep{christopher2025SpecDiff} uses a discrete diffusion model as a drafter inside a speculative decoding loop with an autoregressive verifier, while DiffuSpec~\citep{li2025DiffuSpec} reverses this arrangement, using a DLM to draft for AR verifiers via causal-consistency path search. Within the DLM family itself, Spiffy~\citep{agrawal2025Spiffy} drafts block states from the model's own distribution and verifies them through a directed draft graph, and SSD~\citep{gao2025SSD} eliminates the auxiliary drafter entirely by using the backbone as both drafter and hierarchical-tree verifier in a single forward pass. Learn2PD~\citep{bao2026Learn2PD} takes a different approach, training a lightweight filter that predicts whether each draft token already matches its final committed value, avoiding unnecessary recomputation. APD~\citep{israel2025APD} and Conv/R2FT~\citep{seo2025Conv} explore orthogonal directions: APD modulates the joint decoding distribution, while Conv/R2FT exploits positional locality through a convolutional decoding window over the masked positions.

\paragraph{Position of our work.}
MRP sits at the intersection of these two families. Like threshold-based methods, it uses the backbone's confidence for verification, preserving plug-in compatibility with any block-diffusion backbone. Like speculative methods, it generates additional token candidates beyond what the threshold alone would yield, through a chain of lightweight residual heads whose proposals the backbone then accepts or rejects. The key distinction from prior speculative approaches such as SSD~\citep{gao2025SSD} and Learn2PD~\citep{bao2026Learn2PD} is MRP's \emph{residual} parameterization: rather than predicting the full next-step distribution or learning a binary accept/reject filter, MRP predicts only the small correction between adjacent denoising steps, which we show in Section~\ref{sec:method} is decisive for keeping the module lightweight while matching backbone quality at high parallelism.

\section{Residual Magnitude Measurement (Figure~\ref{fig:motivation})}
\label{app:residual_magnitude}

This appendix specifies the protocol used to produce
Figure~\ref{fig:motivation}, which compares the magnitude of the per-step
residual against the magnitude of the underlying backbone state.

\textbf{Models and data.} We measure on three frozen SDAR-Chat
backbones~\citep{cheng2025SDAR} at the 1.7B, 4B, and 8B scales (block size
$B{=}16$). Prompts are sampled from the GSM8K test
set~\citep{cobbe2021GSM8K} with a fixed seed; questions are wrapped in the
SDAR chat template with the prefix \emph{``Let's think step by step.''} and
prompts longer than $1024$ tokens are skipped. We collect statistics from
the first four decoded blocks of each prompt and stop after at least
$200$ blocks per model.

\textbf{Decoding.} Each block of $16$ positions is decoded with greedy,
low-confidence-static unmasking, revealing exactly one token per
denoising step (the position with the highest argmax-softmax confidence
among the still-masked positions). Decoding stops early on EOS tokens.
This procedure traverses $17$ block states $s = 0, 1, \ldots, 16$, where
$s$ counts the number of revealed positions in the current block.

\textbf{What we record.} At every state $s$ we save both the backbone
final-layer hidden states $\*h^{(s)} \in \mathbb{R}^{B \times d}$ and the
post-projection logits $\*\ell^{(s)} \in \mathbb{R}^{B \times V}$,
restricted to the current block's positions. For brevity we let
$X^{(s)} \in \{\*\ell^{(s)}, \*h^{(s)}\}$ denote either tensor below.

\textbf{Residual and reference quantities.} For each $r \in \{1, \ldots,
16\}$ and each valid offset $t$, we compute the residual
\begin{equation*}
    \*\delta_{t,r} \;=\; X^{(t+r)} - X^{(t)},
    \qquad t = 0, 1, \ldots, 16-r,
\end{equation*}
which is the change in $X$ produced by revealing $r$ additional tokens.
The backbone reference is taken over the partially decoded states
$X^{(s)}$ for $s = 1, \ldots, 16$.

\textbf{Magnitude metric.} Throughout the figure we report the
root-mean-square magnitude per entry,
$\|Y\|_{\mathrm{RMS}} = \sqrt{\frac{1}{|Y|} \sum_{i} Y_i^2}$,
computed in float64 to avoid bf16 underflow. For each model, space
(logits or hidden states), and value of $r$, we average
$\|\*\delta_{t,r}\|_{\mathrm{RMS}}$ over all collected $(\text{block},\,
t)$ pairs; the dashed reference lines show the average of
$\|X^{(s)}\|_{\mathrm{RMS}}$ over all collected $(\text{block},\, s)$
pairs.

\section{Implementation Details}
\label{app:hyperparams}

Table~\ref{tab:hyperparams} lists the full set of training hyperparameters
used for all three model scales.

\begin{table}[t!]
    \centering
    \tabfont
    \begin{tabular}{ll}
        \toprule
        \textbf{Hyperparameter} & \textbf{Value} \\
        \midrule
        \multicolumn{2}{l}{\emph{Architecture}} \\
        Backbone (frozen)              & SDAR-\{1.7B, 4B, 8B\}-Chat (block size 16) \\
        MRP transformer layers $D$     & 3 \\
        Frozen modules                 & backbone, LM head, token embeddings \\
        MRP init.\ std.\ $\sigma_{\text{init}}$ & 0.2 \\
        \midrule
        \multicolumn{2}{l}{\emph{Objective}} \\
        Loss                           & forward KD on residual logits \\
        KD temperature $T_{\mathrm{KD}}$ & 1.0 \\
        Reveal mode                    & ground-truth, $k{=}1$ token per block \\
        MRP unroll steps $K$           & 2 \\
        Per-step loss weighting        & uniform \\
        \midrule
        \multicolumn{2}{l}{\emph{Optimization}} \\
        Optimizer                      & AdamW \\
        Peak learning rate             & $1\mathrm{e}{-3}$ \\
        LR schedule                    & cosine with min-LR \\
        Epochs                         & 1 \\
        Sequence length                & 4096 \\
        Block length                   & 16 \\
        Global / micro batch size      & 16 / 1 \\
        Gradient checkpointing         & enabled \\
        \midrule
        \multicolumn{2}{l}{\emph{Data}} \\
        Dataset                        & SFTDatasetV3~\citep{wang2024MambaInLlama} \\
        Supervised tokens              & assistant response tokens only \\
        \bottomrule
    \end{tabular}
    \vspace{0.5em}
    \caption{Training hyperparameters used for all three model scales.}
    \label{tab:hyperparams}
\end{table}

\section{Throughput of MRP Remasking}
\label{app:remask-throughput}

Table~\ref{tab:remask-tps} reports decoding throughput (tokens per second, measured in
SGLang on a single NVIDIA H100) for \emph{Dynamic} decoding and \emph{MRP Remask}, over the
same backbones and thresholds as the accuracy results in Table~\ref{tab:remask-acc}.
Remasking adds one MRP forward per step and commits fewer tokens per step (the revoked
reveals), so it generally runs somewhat slower than Dynamic at matched $\tau$: on the
reasoning benchmarks it retains roughly $0.76$--$0.88\times$ of the Dynamic throughput, while
on the code benchmarks the ratio is more variable, ranging from modestly slower to
occasionally faster. This throughput reduction is the efficiency cost of the accuracy gains
in Table~\ref{tab:remask-acc}.

\begin{table}[t]
    \centering
    \tabfont
    \begin{tabular}{cl ll ll ll ll}
        \toprule
        \multirow{2}{*}[-0.5ex]{\textbf{Model}} & \multirow{2}{*}[-0.5ex]{\textbf{$\tau$}}
            & \multicolumn{2}{c}{\textbf{GSM8K}}
            & \multicolumn{2}{c}{\textbf{MATH500}}
            & \multicolumn{2}{c}{\textbf{HumanEval}}
            & \multicolumn{2}{c}{\textbf{MBPP}} \\
        \cmidrule(lr){3-4} \cmidrule(lr){5-6} \cmidrule(lr){7-8} \cmidrule(lr){9-10}
            & & Dyn. & Remask & Dyn. & Remask & Dyn. & Remask & Dyn. & Remask \\
        \midrule
        \multirow{5}{*}{1.7B}
            & $0.5$ & $975$ & $792$ & $1126$ & $956$ & $588$ & $587$ & $533$ & $363$ \\
            & $0.6$ & $881$ & $672$ & $1065$ & $871$ & $554$ & $528$ & $420$ & $378$ \\
            & $0.7$ & $820$ & $651$ & $985$ & $818$ & $462$ & $555$ & $506$ & $389$ \\
            & $0.8$ & $719$ & $565$ & $929$ & $761$ & $528$ & $556$ & $420$ & $337$ \\
            & $0.9$ & $627$ & $491$ & $784$ & $639$ & $480$ & $512$ & $375$ & $297$ \\
        \midrule
        \multirow{5}{*}{4B}
            & $0.5$ & $720$ & $558$ & $753$ & $603$ & $448$ & $354$ & $441$ & $309$ \\
            & $0.6$ & $641$ & $513$ & $728$ & $593$ & $449$ & $437$ & $269$ & $297$ \\
            & $0.7$ & $596$ & $489$ & $660$ & $532$ & $433$ & $409$ & $253$ & $214$ \\
            & $0.8$ & $523$ & $422$ & $599$ & $490$ & $400$ & $400$ & $235$ & $206$ \\
            & $0.9$ & $466$ & $366$ & $519$ & $437$ & $318$ & $363$ & $211$ & $179$ \\
        \midrule
        \multirow{5}{*}{8B}
            & $0.5$ & $507$ & $416$ & $532$ & $469$ & $354$ & $324$ & $253$ & $199$ \\
            & $0.6$ & $454$ & $378$ & $524$ & $432$ & $361$ & $342$ & $229$ & $180$ \\
            & $0.7$ & $419$ & $340$ & $479$ & $394$ & $370$ & $334$ & $210$ & $166$ \\
            & $0.8$ & $378$ & $306$ & $439$ & $349$ & $309$ & $298$ & $187$ & $154$ \\
            & $0.9$ & $313$ & $257$ & $380$ & $312$ & $326$ & $277$ & $167$ & $134$ \\
        \bottomrule
    \end{tabular}
    \vspace{0.5em}
    \caption{\textbf{Decoding throughput (tokens/s) of MRP Remask vs.\ Dynamic decoding.}
    Measured in SGLang on a single NVIDIA H100, for each SDAR backbone (1.7B/4B/8B) and
    unmasking threshold $\tau$. \emph{Dyn.}\ is plain threshold-based dynamic decoding and
    \emph{Remask} is our MRP remasking mode. This is the throughput counterpart to the
    accuracy results in Table~\ref{tab:remask-acc}.}
    \label{tab:remask-tps}
\end{table}

\section{MRP Depth Sweep on SDAR-1.7B}
\label{app:layer-sweep}

Table~\ref{tab:layer-sweep-1p7b} reports the raw accuracy and decoding
throughput numbers underlying Figure~\ref{fig:ablation-depth}. The MRP
module depth is varied over $D \in \{1, 2, 3, 4, 8\}$ on top of a frozen
SDAR-1.7B-Chat backbone, and we evaluate the four decoding configurations
shown in the figure: \emph{direct} with $K{\in}\{1, 2\}$ MRP steps and
\emph{speculative} with $K{\in}\{2, 3\}$ MRP steps. All other settings
follow Table~\ref{tab:hyperparams}; throughput is reported in tokens per
second on a single H100, and accuracy is the standard task metric (\%).

\section{Proofs and Derivations}
\label{app:proofs}

We provide complete proofs for the results stated in Section~\ref{sec:method}. Throughout, we use the notation established in Section~\ref{sec:prelim}: $\*x_t$ is the partially masked sequence at step $t$, $f$ is the backbone, $\*\ell_t = f(\*x_t) \in \mathbb{R}^{L \times V}$ are the logits, $\*\pi_t^i = \mathrm{softmax}(\*\ell_t^i) \in \Delta^V$ is the predictive distribution at position $i$, $\mathcal{R}_t$ is the set of positions unmasked at step $t$, and $\*e_v \in \mathbb{R}^{d_e}$ denotes the embedding of token $v$.

\subsection{Proof of Proposition~\ref{prop:contraction}}

\begin{proof}
The transition $\*x_t \to \*x_{t-1}$ replaces the \texttt{[MASK]} token at each position $j \in \mathcal{R}_t$ with the predicted token $v_j$, leaving all other positions unchanged. In embedding space, the perturbation $\*E_{t-1} - \*E_t$ is nonzero at exactly $|\mathcal{R}_t|$ rows, with Frobenius norm:
\begin{equation}
    \|\*E_{t-1} - \*E_t\|_F = \sqrt{\sum_{j \in \mathcal{R}_t} \|\*e_{v_j} - \*e_{\texttt{[MASK]}}\|_2^2} \leq \sqrt{|\mathcal{R}_t|} \cdot \max_{j \in \mathcal{R}_t} \|\*e_{v_j} - \*e_{\texttt{[MASK]}}\|_2.
    \label{eq:perturbation_norm}
\end{equation}

Since $f$ is $\kappa$-Lipschitz with respect to the input embedding matrix in Frobenius norm, and the Lipschitz condition applies to the full output tensor $\*\ell \in \mathbb{R}^{L \times V}$ while we bound a single row, the per-position logit change satisfies:
\begin{equation}
    \|\*\ell_{t-1}^i - \*\ell_t^i\|_2 \leq \frac{\kappa\sqrt{|\mathcal{R}_t|}}{L} \cdot \max_{j \in \mathcal{R}_t} \|\*e_{v_j} - \*e_{\texttt{[MASK]}}\|_2.
    \label{eq:logit_bound}
\end{equation}

To convert this into a total variation bound, we invoke the fact that softmax is $\frac{1}{2}$-Lipschitz in all $\ell_p$ norms~\citep{nair2025softmaxLipschitz}. In particular, for the $\ell_2 \to \ell_1$ case:
\begin{equation}
    D_{\mathrm{TV}}(\*\pi_{t-1}^i, \*\pi_t^i) = \frac{1}{2}\|\*\pi_{t-1}^i - \*\pi_t^i\|_1 \leq \frac{1}{2}\|\*\ell_{t-1}^i - \*\ell_t^i\|_2.
\end{equation}

Substituting \eqref{eq:logit_bound} and using $\sqrt{|\mathcal{R}_t|} \leq |\mathcal{R}_t|$ for $|\mathcal{R}_t| \geq 1$, absorbing the factor of $\frac{1}{2}$ into $\kappa$:
\begin{equation}
    D_{\mathrm{TV}}(\*\pi_{t-1}^i, \*\pi_t^i) \leq \kappa \cdot \frac{|\mathcal{R}_t|}{L} \cdot \max_{j \in \mathcal{R}_t} \|\*e_{v_j} - \*e_{\texttt{[MASK]}}\|_2. \qedhere
\end{equation}
\end{proof}

\begin{table}[h!]
    \centering
    \tabfont
    \begin{tabular}{cl cc cc cc cc}
        \toprule
        \multirow{2}{*}[-0.5ex]{\textbf{Depth $D$}} & \multirow{2}{*}[-0.5ex]{\textbf{Setting}}
            & \multicolumn{2}{c}{\textbf{GSM8K}}
            & \multicolumn{2}{c}{\textbf{MATH500}}
            & \multicolumn{2}{c}{\textbf{HumanEval}}
            & \multicolumn{2}{c}{\textbf{MBPP}} \\
        \cmidrule(lr){3-4} \cmidrule(lr){5-6} \cmidrule(lr){7-8} \cmidrule(lr){9-10}
            & & Acc. & TPS & Acc. & TPS & Acc. & TPS & Acc. & TPS \\
        \midrule
        \multirow{4}{*}{$1$}
            & Direct ($K{=}1$)        & $74.8$ & $529$ & $48.4$ & $553$ & $45.7$ & $300$ & $46.7$ & $342$ \\
            & Direct ($K{=}2$)        & $71.3$ & $613$ & $44.2$ & $642$ & $35.4$ & $335$ & $35.4$ & $432$ \\
            & Speculative ($K{=}2$)   & $77.3$ & $446$ & $55.6$ & $468$ & $53.7$ & $356$ & $52.5$ & $337$ \\
            & Speculative ($K{=}3$)   & $76.2$ & $436$ & $53.0$ & $459$ & $54.3$ & $360$ & $53.3$ & $326$ \\
        \midrule
        \multirow{4}{*}{$2$}
            & Direct ($K{=}1$)        & $75.7$ & $543$ & $51.2$ & $557$ & $49.4$ & $342$ & $47.5$ & $339$ \\
            & Direct ($K{=}2$)        & $72.3$ & $625$ & $47.0$ & $648$ & $39.6$ & $291$ & $37.7$ & $434$ \\
            & Speculative ($K{=}2$)   & $77.1$ & $452$ & $55.4$ & $477$ & $53.7$ & $367$ & $52.5$ & $345$ \\
            & Speculative ($K{=}3$)   & $76.4$ & $446$ & $53.8$ & $474$ & $54.3$ & $347$ & $53.3$ & $319$ \\
        \midrule
        \multirow{4}{*}{$3$}
            & Direct ($K{=}1$)        & $75.0$ & $530$ & $49.8$ & $531$ & $47.6$ & $286$ & $45.9$ & $325$ \\
            & Direct ($K{=}2$)        & $73.9$ & $597$ & $50.4$ & $617$ & $37.8$ & $322$ & $40.9$ & $418$ \\
            & Speculative ($K{=}2$)   & $76.9$ & $445$ & $55.6$ & $460$ & $54.3$ & $319$ & $52.5$ & $338$ \\
            & Speculative ($K{=}3$)   & $76.0$ & $438$ & $54.2$ & $452$ & $54.3$ & $325$ & $53.3$ & $314$ \\
        \midrule
        \multirow{4}{*}{$4$}
            & Direct ($K{=}1$)        & $77.6$ & $517$ & $50.8$ & $543$ & $46.3$ & $263$ & $47.9$ & $385$ \\
            & Direct ($K{=}2$)        & $74.8$ & $595$ & $46.2$ & $611$ & $37.2$ & $303$ & $40.5$ & $387$ \\
            & Speculative ($K{=}2$)   & $77.3$ & $424$ & $55.8$ & $448$ & $53.7$ & $331$ & $52.5$ & $313$ \\
            & Speculative ($K{=}3$)   & $76.3$ & $421$ & $53.4$ & $439$ & $54.3$ & $312$ & $53.3$ & $300$ \\
        \midrule
        \multirow{4}{*}{$8$}
            & Direct ($K{=}1$)        & $77.1$ & $483$ & $50.2$ & $503$ & $43.9$ & $285$ & $49.8$ & $354$ \\
            & Direct ($K{=}2$)        & $74.7$ & $524$ & $46.4$ & $532$ & $38.4$ & $335$ & $41.2$ & $394$ \\
            & Speculative ($K{=}2$)   & $77.0$ & $375$ & $55.4$ & $409$ & $53.7$ & $294$ & $52.5$ & $283$ \\
            & Speculative ($K{=}3$)   & $76.3$ & $357$ & $53.4$ & $374$ & $54.3$ & $304$ & $53.3$ & $272$ \\
        \bottomrule
    \end{tabular}
    \vspace{0.5em}
    \caption{Raw numerical results for the SDAR-1.7B MRP depth sweep
    plotted in Figure~\ref{fig:ablation-depth}. Accuracy (Acc., \%) and
    decoding throughput (TPS, tokens per second) are reported per
    benchmark for MRP depths $D \in \{1, 2, 3, 4, 8\}$ and the four
    decoding configurations. \emph{Direct} uses MRP predictions without
    verification; \emph{Speculative} uses the backbone to verify and
    correct MRP drafts.}
    \label{tab:layer-sweep-1p7b}
\end{table}

\subsection{Proof of Corollary~\ref{cor:decay}}

\begin{proof}
The bound in Proposition~\ref{prop:contraction} tightens as denoising progresses through two mechanisms. First, the number of tokens revealed per step $|\mathcal{R}_t|$ is bounded by the number of remaining masked positions $m_t$, which decreases monotonically along the chain. In threshold-based decoding, later steps reveal fewer tokens because the remaining positions are those where the backbone had low confidence; in fixed-schedule decoding, $|\mathcal{R}_t|$ is typically non-increasing by design. Second, as more context is revealed, predictions at remaining positions become more concentrated --- when $\max_v \pi_t^{i,v} \geq 1 - \eta$ for small $\eta$, no perturbation can shift the distribution by more than $O(\eta)$ in total variation, regardless of the Lipschitz constant.

Combining these, the average per-position TV distance at step $t$ satisfies:
\begin{equation}
    \frac{1}{m_{t-1}} \sum_{i \notin \mathcal{R}_t} D_{\mathrm{TV}}(\*\pi_{t-1}^i, \*\pi_t^i) \leq \kappa D_{\max} \cdot \frac{m_t}{L},
\end{equation}
where $D_{\max} = \max_v \|\*e_v - \*e_{\texttt{[MASK]}}\|_2$. Since $m_t/L$ decreases monotonically from $1$ to $0$, the residual magnitude decays along the chain.
\end{proof}

\end{document}